\newcommand{\N}{\mathcal{N}}
\newcommand{\D}{\mathcal{D}}
\newcommand{\E}{\mathcal{E}}
\newcommand{\Y}{\mathcal{Y}}
\newcommand{\HH}{\mathcal{H}}
\newcommand{\Z}{\mathcal{Z}}
\newcommand{\PP}{\mathcal{P}}
\newcommand{\revision}[1]{#1} % {\textcolor{red}{#1}}
\newcommand{\revisionn}[1]{#1}
\theoremstyle{nospace} \newtheorem{theorem}{Theorem}
\theoremstyle{nospace} 
\theoremstyle{nospace} 
\theoremstyle{nospace} 
\theoremstyle{nospace} 
\theoremstyle{nospace} \newtheorem{definition}{Definition}
\theoremstyle{nospace} \newtheorem{corollary}{Corollary}
\theoremstyle{nospace} \newtheorem{assumption}{Assumption}
\title{PAC-Bayes Control: Learning Policies that \\ Provably Generalize to Novel Environments}
\author[1]{Anirudha Majumdar}
\author[2]{Alec Farid}
\author[3]{Anoopkumar Sonar}
\affil[1,2]{Department of Mechanical and Aerospace Engineering}
\affil[3]{Department of Computer Science}
\affil[ ]{Princeton University}
\affil[ ]{Princeton, NJ, 08544, USA}
\affil[ ]{Emails: \{ani.majumdar, afarid, asonar\}@princeton.edu}
\date{\today}
\begin{document}

\maketitle

\begin{abstract}
Our goal is to learn control policies for robots that provably generalize well to novel environments given a dataset of example environments. The key technical idea behind our approach is to leverage tools from \emph{generalization theory} in machine learning by exploiting a precise analogy (which we present in the form of a reduction) between generalization of control policies to novel environments and generalization of hypotheses in the supervised learning setting. In particular, we utilize the \emph{Probably Approximately Correct (PAC)-Bayes} framework, which allows us to obtain upper bounds that hold with high probability on the expected cost of (stochastic) control policies across novel environments. We propose policy learning algorithms that explicitly seek to minimize this upper bound. The corresponding optimization problem can be solved using \emph{convex} optimization (\emph{Relative Entropy Programming} in particular) in the setting where we are optimizing over a finite policy space. In the more general setting of continuously parameterized policies (e.g., neural network policies), we minimize this upper bound using stochastic gradient descent. We present simulated results of our approach applied to learning (1) reactive obstacle avoidance policies and (2) neural network-based grasping policies. \revision{We also present hardware results for the Parrot Swing drone navigating through different obstacle environments.} Our examples demonstrate the potential of our approach to provide strong generalization guarantees for robotic systems with continuous state and action spaces, complicated (e.g., nonlinear) dynamics, rich sensory inputs (e.g., depth images), and neural network-based policies. 
\end{abstract}

% Sections
\section{Introduction}
\label{sec:intro}

%\begin{figure}[h]
%\begin{center}
%\includegraphics[width=0.8\columnwidth]{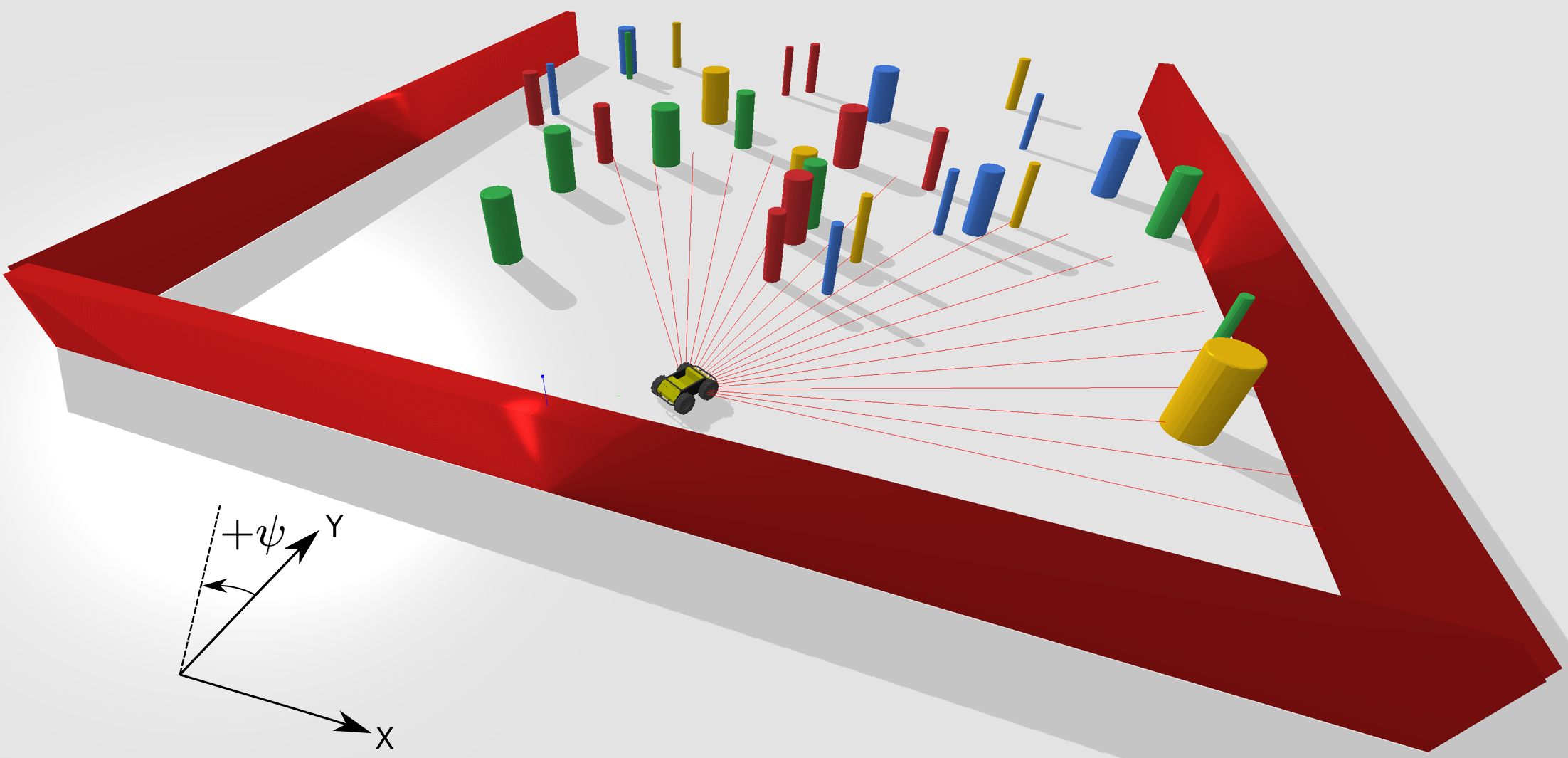}
%\end{center} 
%\caption{We demonstrate our approach on the problem of synthesizing reactive obstacle avoidance controllers for a differential drive ground vehicle model equipped with a depth sensor. Our approach provides strong guarantees on the performance of the synthesized controllers on novel environments even with a small number of training environments (e.g., guaranteed expected collision-free traversal rate of $82.2\%$ even with 100 training environments). \label{fig:husky}}
%\end{figure}

\begin{figure}[h!]
 \centering
   \subfigure[\label{fig:drone}] 
   {\includegraphics[width=0.6\textwidth]{figures/husky_axes.png} \label{fig:husky}}
  % \hspace{15pt}
   \subfigure[\label{fig:tool}] 
   {\includegraphics[width=0.258\textwidth]{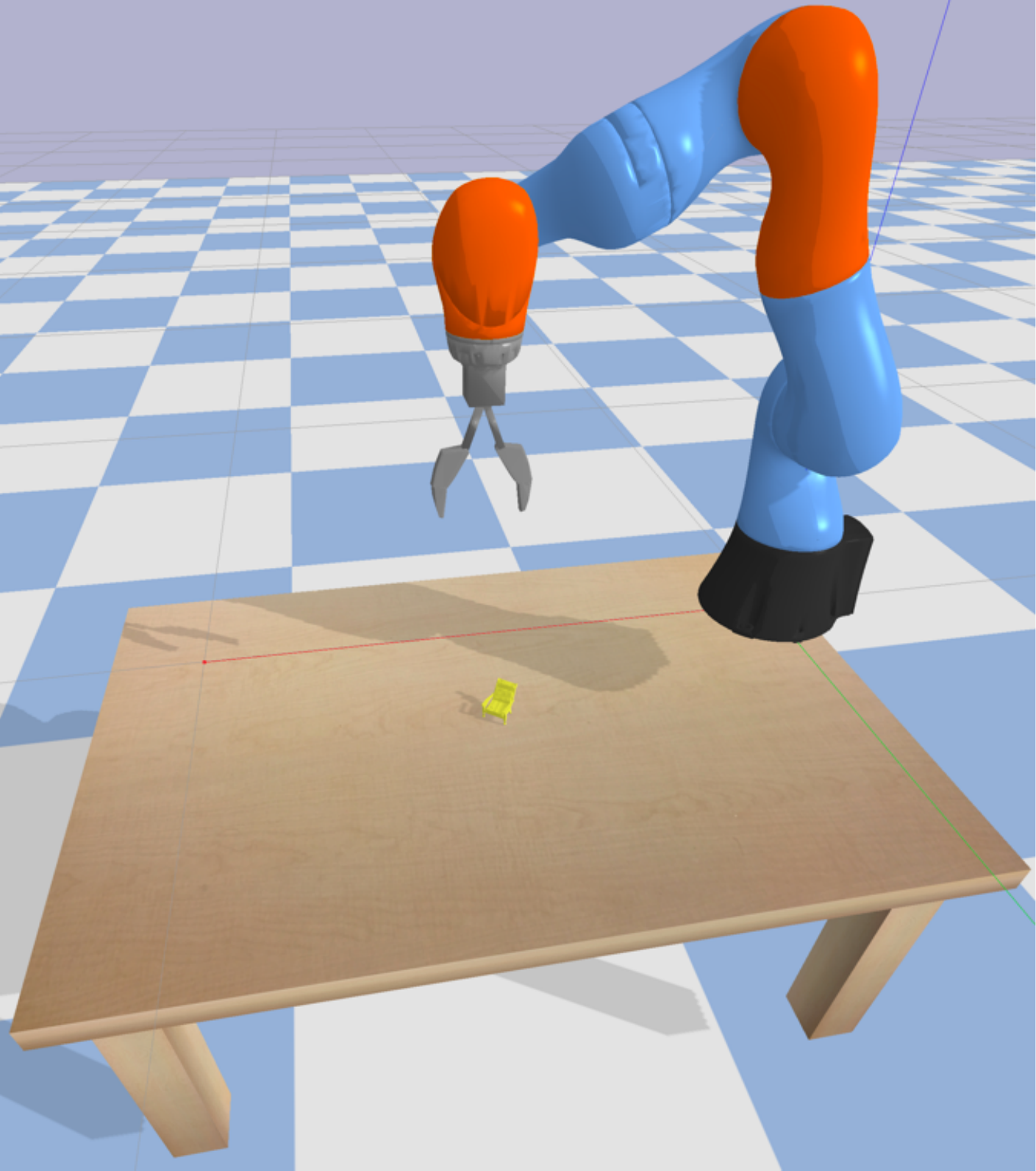} \label{fig:arm}}
	\vspace{-5pt}
\caption{\footnotesize{We demonstrate our approach for learning (i) reactive obstacle avoidance policies for a differential drive ground vehicle model equipped with a depth sensor, and (ii) neural network-based grasping policies for a manipulator model equipped with an RGB-D sensor. 
Our approach provides strong guarantees on the performance of the learned policies on novel environments even with a relatively small number of training environments (e.g., a guaranteed expected collision-free traversal rate of $87.9\%$ using 1000 training environments for the obstacle avoidance example and a guaranteed expected success rate of $70.6\%$ for the grasping example using 2000 training objects).} \label{fig:examples}}
\vspace*{-5pt}
\end{figure}

\begin{figure}[h!]
% \revisionfig
 \centering
   \subfigure[\label{fig:lab_swing}]{
   % \revisionfig
    \includegraphics[width=0.4\columnwidth]{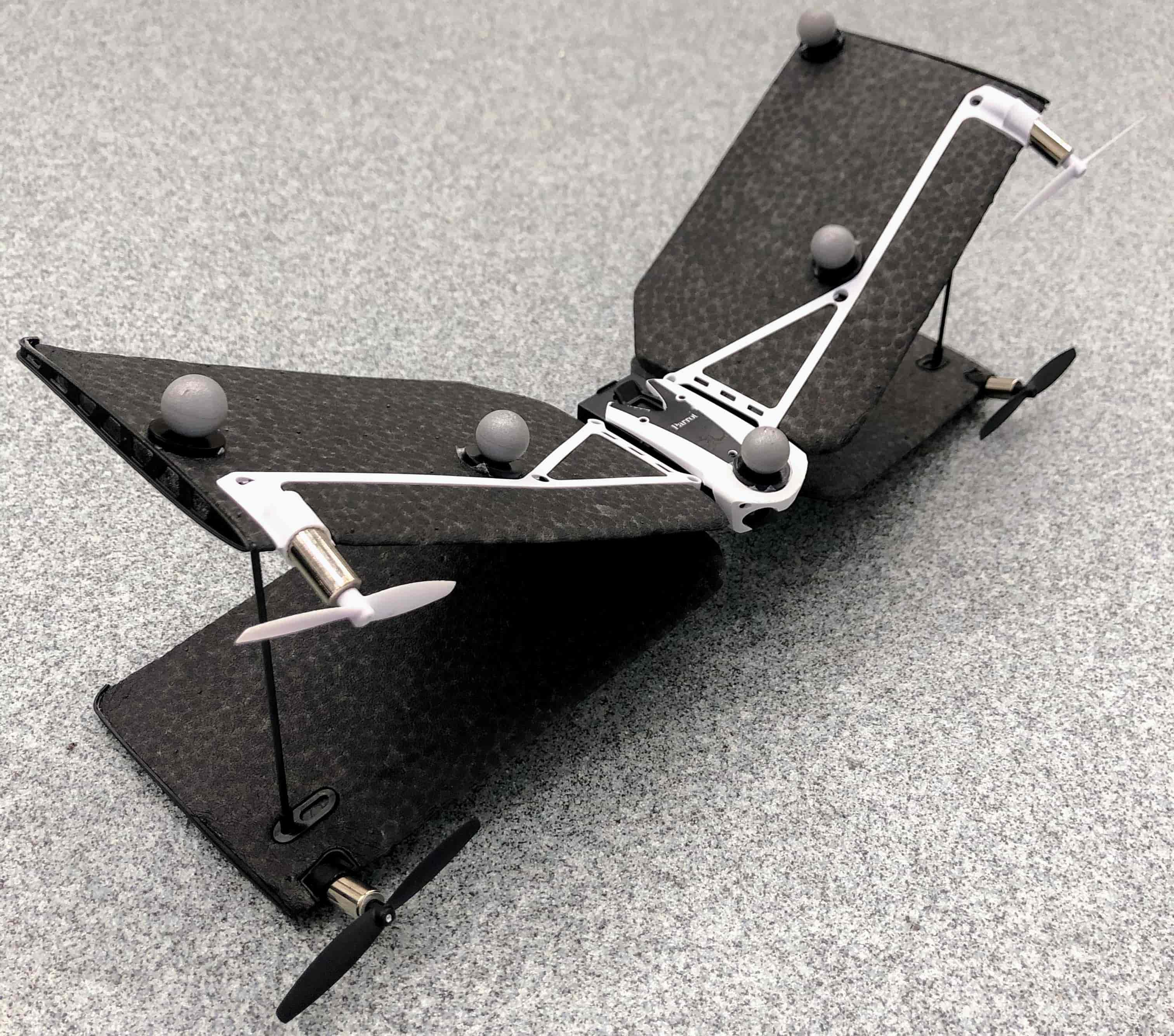}} 
   \subfigure[\label{fig:swingb}]{
    \includegraphics[width=0.428\columnwidth]{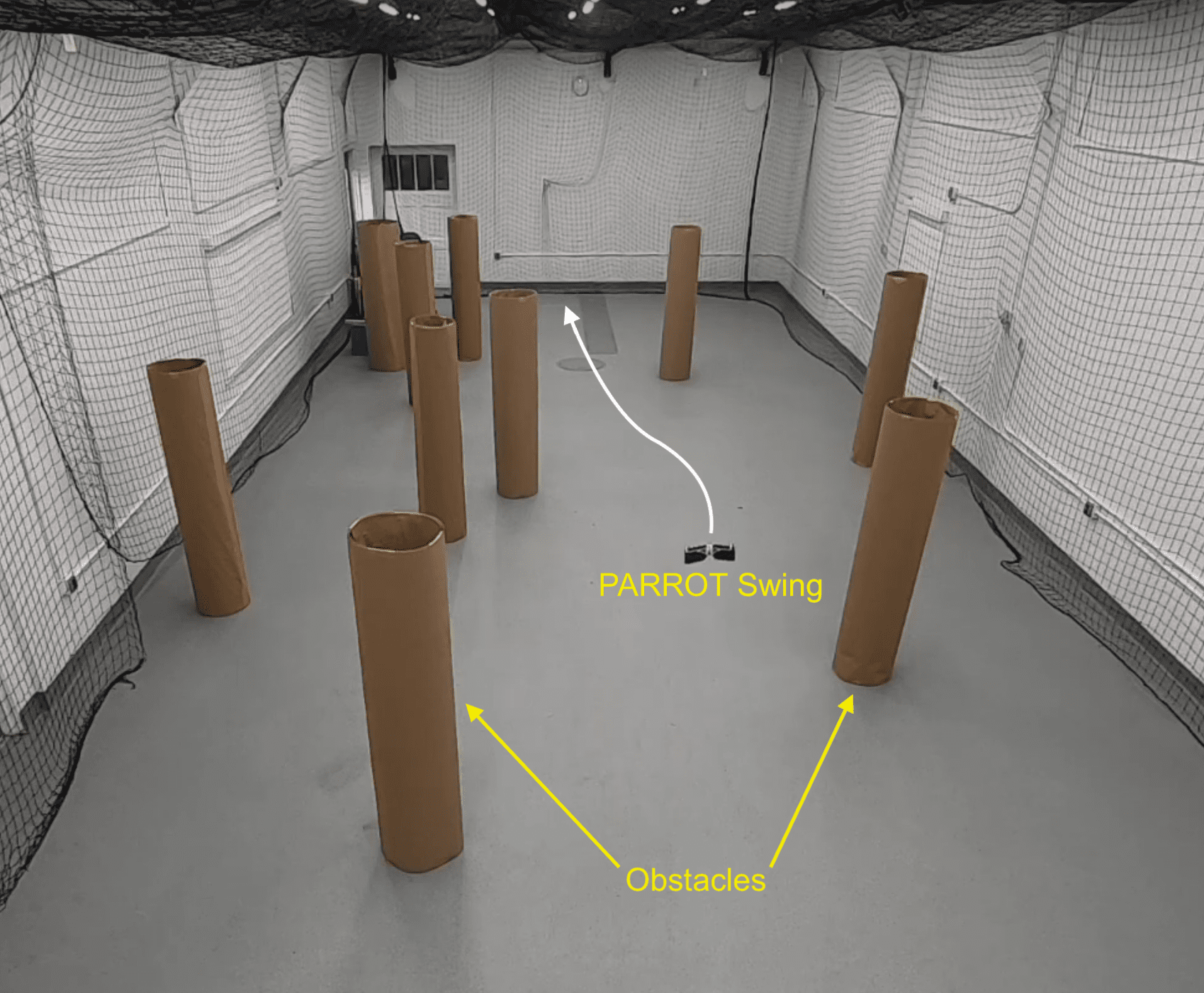}} 
   \vspace{-5pt}
\caption{\footnotesize{Pictured in (a) is a Parrot Swing drone --- a quadrotor/fixed-wing hybrid vehicle. We demonstrate our approach for learning reactive obstacle avoidance policies for the Swing given a simulated depth sensor. Our approach provides a guaranteed expected collision-free traversal rate of 88.6$\%$ on novel environments using 1000 simulated training environments. When testing on unseen environments within the netted area pictured in (b), the Swing succeeds in 18/20 trials. Videos of representative trials can be found at https://youtu.be/p5CjcSsojg8.}\label{fig:swing}}
\vspace{-10pt}
\end{figure}

Imagine an unmanned aerial vehicle that successfully navigates a thousand different obstacle environments or a robotic manipulator that successfully grasps a million objects in our dataset. How likely are these systems to succeed on a novel (i.e., previously unseen) environment or object? How can we explicitly learn control policies that provably generalize well to environments or objects that our robot has not previously encountered? Current approaches for designing control policies for robotic systems either do not provide such guarantees on generalization or provide guarantees only under very restrictive assumptions (e.g., strong assumptions on the geometry of a novel environment \citep{Schouwenaars04, Fraichard07, Althoff15, Majumdar17}). 

The goal of this paper is to develop an approach for learning control policies for robotic systems that provably generalize well with high probability to novel environments given a dataset of example environments. 
% In particular, we are interested in settings where the \emph{geometry} of a novel environment is unknown a priori (e.g., the geometry of a novel object or obstacle environment). 
 The key conceptual idea for enabling this is to establish a precise analogy between generalization of policies to novel environments and generalization in supervised learning. This analogy allows us to translate techniques for learning hypotheses with generalization guarantees in the supervised learning setting into techniques for learning control policies for robot tasks with performance guarantees on novel environments.

In order to obtain more insight into this analogy, suppose we have a dataset of $N$ objects. A simple approach to learning a grasping policy is to synthesize one that achieves the best possible performance on these $N$ objects. However, such a strategy might result in an overly complex policy that overfits to the specific objects at hand. This is a particularly important challenge for robotics applications since datasets are generally relatively small (e.g., as compared to training sets for image classification tasks).  In order to learn a policy that generalizes well to novel environments, we may need to add a ``regularizer" that penalizes the ``complexity" of the policy. This raises the following questions: (1) what form should this regularizer take?; and (2) can we provide a formal guarantee on the performance of the resulting policy on novel environments?

The analogous questions for \emph{supervised} learning algorithms have been extensively studied in the literature on \emph{generalization theory} in machine learning. Here we leverage PAC-Bayes theory (Probably Approximately Correct Bayes) \citep{McAllester99}, which provides some of the tightest known generalization bounds for classical supervised learning approaches \citep{Langford03, Seeger02, Germain09}. Very recently, PAC-Bayes analysis has also been used to train deep neural networks with guarantees on generalization performance \citep{Dziugaite17, Neyshabur17, Neyshabur17a}. As we will see, we can leverage PAC-Bayes theory to provide precise answers to both questions posed above; it will allow us to specify a regularizer for designing (stochastic) control policies that provably generalize well (with high probability) to novel environments.

\subsection{Statement of Contributions}
\label{sec:contributions}

\revisionn{The primary contribution of this paper is to introduce a framework for providing \emph{generalization guarantees} for learning-based control of robots. While generalization bounds have been studied extensively in the literature on supervised learning (as discussed above), there has been relatively little work on this topic in the literature on robot learning (see Section \ref{sec:related work} for a thorough literature review). 
To our knowledge, the results in this paper constitute the first attempt to provide generalization guarantees on learning-based control policies for robotic systems with continuous state and action spaces, complicated (e.g., nonlinear or hybrid) dynamics, and rich sensory inputs (e.g., RGB-D images).} 
To this end, this paper makes four specific contributions. First, we provide a \emph{reduction} that allows us to translate generalization bounds for supervised learning problems to generalization bounds for control policies. We apply this reduction to translate PAC-Bayes bounds to the control setting we consider here (Section \ref{sec:pac bayes control}). Second, we propose learning algorithms that minimize the regularized cost functions specified by PAC-Bayes theory in order to synthesize control policies with generalization guarantees (Section \ref{sec:computing pac-bayes controllers}). In the setting where we are optimizing over a finite policy space (Section \ref{sec:finite policy space}), the corresponding optimization problem can be solved using \emph{convex} optimization techniques (\emph{Relative Entropy Programs (REPs)} in particular). In the more general setting of continuously-parameterized policies (Section \ref{sec:continuous policy space}), we rely on stochastic gradient descent to perform the optimization. Third, in Section \ref{sec:domain shift} we present an extension of our basic approach that allows us to learn policies that are distributionally robust (i.e., handle settings where test environments are drawn from a different distribution than training environments). Fourth, we demonstrate our approach in simulation for learning (i) depth sensor-based reactive obstacle avoidance policies for the ground robot model shown in Figure \ref{fig:husky} (Section \ref{sec:reactive obstacle avoidance control}), and (ii) neural network-based grasping policies for the manipulator model shown in Figure \ref{fig:arm} (Section \ref{sec:grasping}). \revision{Finally, we also present hardware results for reactive obstacle avoidance control with the Parrot Swing drone shown in Figure \ref{fig:lab_swing} (Section \ref{sec:hardware}).}
Our simulation and hardware results demonstrate that we are able to obtain strong generalization guarantees even with a relatively small number of training environments. We compare the bounds obtained from PAC-Bayes theory with exhaustive sampling to illustrate the tightness of the bounds. 

A preliminary version of this work \citep{Majumdar18} was presented at the Conference on Robot Learning (CoRL) 2018. In this significantly revised and extended version, we additionally present: (i) an extension of our basic approach for providing generalization guarantees in settings where test environments are drawn from a different distribution to training environments (Section \ref{sec:domain shift}), (ii) an application of our framework for learning neural-network based grasping policies (Section \ref{sec:grasping}), (iii) a method for handling stochastic dynamics (Section \ref{sec:stochastic rollouts}), (iv) hardware implementation of the depth sensor-based reactive obstacle avoidance policies (Section \ref{sec:hardware}), and (v) a more thorough discussion of challenges associated with our approach and promising future directions (Section \ref{sec:conclusions}).

\subsection{Related Work}
\label{sec:related work}

One approach for synthesizing control policies with guaranteed performance is to leverage robust control techniques (e.g., H-infinity control \citep{Francis87} or chance-constrained programming \citep{Charnes59, Blackmore06, Vitus11, Ono15}). However, such techniques typically require an explicit description of the uncertainty affecting the system. While uncertainty models for the robot's dynamics or measurements can often be obtained via system identification, assuming an uncertainty model for the environment (e.g., a distribution over all possible environment geometries) is unrealistic. One way to address this is to assume that a novel environment satisfies conditions that allow a real-time planner to \emph{always} succeed. For example, in the context of navigation, this constraint could be satisfied by hand-coding emergency maneuvers (e.g., stopping maneuvers or loiter circles) that are always guaranteed to succeed \citep{Schouwenaars04, Fraichard07, Althoff15}. However, requiring the existence of such emergency maneuvers can lead to extremely conservative behavior. Another approach is to assume that the environment satisfies certain geometric conditions (e.g., large separation between obstacles) that allow for safe navigation \citep{Majumdar17}. However, such conditions are rarely satisfied by real-world environments. Moreover, such conditions are domain specific; it is not clear how one would specify such constraints for problems other than navigation (e.g., grasping). 

Another conceptually appealing approach for synthesizing policies with guaranteed performance on a priori unknown environments is to model the problem as a Partially Observable Markov Decision Process (POMDP) \citep{Kaelbling98}, where the environment is part of the (partially observed) state of the system \citep{Richter15}. Computational considerations aside, such an approach is made infeasible by the need to specify a distribution over environments the robot might encounter. Unfortunately, specifying such a distribution over real-world environments is an extremely challenging endeavor. Thus, many approaches (including ours) assume that we only have \emph{indirect} access to the true underlying distribution over environments in the form of examples. For example, \citep{Richter15, Richter17} propose an approximation to the POMDP framework in the context of navigation by learning to predict future collision probabilities from past data. The work on deep-learning based approaches for control represents another prominent set of techniques where interactions with example environments are used to learn control policies (see, e.g., \citep{Lenz15, Levine16, Agrawal16, Mahler17, Tobin17, Gupta17, Gupta17a, Zhu17, Sunderhauf18}). While the approaches mentioned above have led to impressive empirical demonstrations, it is challenging to guarantee that such methods will perform well on environments that are not part of the training data (especially when a limited number of training examples are available, as is often the case for robotics applications). Our work seeks to address this challenge using ideas from generalization theory. 

The primary theoretical framework we utilize in this paper is PAC-Bayes generalization theory \citep{McAllester99}. PAC-Bayes theory provides some of the tightest known generalization bounds for classical supervised learning problems \citep{Langford03, Seeger02, Germain09} and has recently been applied to explain and promote generalization in deep learning \citep{Dziugaite17, Neyshabur17, Neyshabur17a}. PAC-Bayes theory has also been applied to learn control policies for Markov Decision Processes (MDPs) with provable sample complexity bounds  \citep{Fard10, Fard12}. These approaches also exploit the intuition (see Section \ref{sec:intro}) that ``regularizing" policies in an appropriate manner can prevent overfitting and lead to sample efficiency (see also \citep{Neu17, Kearns00, Bagnell01, Bagnell04, Schulman15} for other approaches that exploit this intuition in the reinforcement learning context). However, we note that the focus of our work is quite different from the work on PAC-Bayes MDP bounds (and the more general framework of PAC MDP bounds \citep{Kearns02, Brafman02, Fu14}), which consider the standard reinforcement learning setup where a control policy must be learned through multiple interactions with a given MDP (with unknown transition dynamics and/or rewards). 
In contrast, here we focus on \emph{zero-shot} generalization to a novel environment (e.g., obstacle environments or objects). In other words, a policy learned from examples of different environments must immediately perform well on a new one (i.e., without further exploratory interactions with the new environment). We further note that \citep{Fard10} considers finite state and action spaces along with policies that depend on full state feedback while \citep{Fard12} relaxes the assumption on finite state spaces but retains the other modeling assumptions. In contrast, we target systems with continuous state and action spaces and synthesize control policies that rely on rich sensory inputs. 

On the algorithmic front, we make significant use of Relative Entropy Programs (REPs) \citep{Chandrasekaran17}. REPs constitute a rich class of \emph{convex} optimization problems that generalize many other problems including linear programs, geometric programs, and second-order cone programs \citep{Boyd04}. REPs are optimization problems in which a linear functional of the decision variables is minimized subject to linear constraints and conic constraints given by a \emph{relative entropy cone}. REPs are amenable to efficient solution techniques (e.g., interior point methods \citep{Nesterov94}) and can be solved using existing software packages (e.g., Mosek \citep{mosek}, SCS \citep{SCS, ODonoghue16}, and ECOS  \citep{ECOS}). We refer the reader to \citep{Chandrasekaran17} for a more thorough introduction to REPs. Importantly for us, REPs can handle constraints of the form $\KL(p \| q) \leq c$, where $p$ and $q$ are decision variables corresponding to probability vectors, $\KL(\cdot \|| \cdot)$ represents the Kullback-Leibler divergence, and $c$ is a scalar decision variable. As we will see, this allows us to use REPs to learn control policies using the PAC-Bayes framework in the setting where we are optimizing over a finite set of policies. 

\subsection{Notation}

We use the notation $v[i]$ to refer to the i-th component of a vector $v \in \RR^n$. We use $\RR^n_+$ to denote the set of elementwise nonnegative vectors in $\RR^n$, $\mathbb{Z}_+$ to denote nonnegative integers, and $\odot$ to denote element-wise multiplication.

\section{Problem Formulation}
\label{sec:problem formulation}

We assume that the robot's dynamics are described by a discrete-time system:
\begin{equation}
\label{eq:dynamics}
x(t+1) = f(x(t), u(t); E),
\end{equation}
where $t \in \mathbb{Z}_+$ is the time index, $x(t) \in \X$ is the state at time $t$, $u(t) \in \U$ is the control input at time $t$, and $E$ is the environment that the robot operates in. We use the term ``environment" here broadly to refer to any factors that are external to the robot. For example, $E$ could refer to an obstacle field that a mobile robot is attempting to navigate through, external disturbances (e.g., wind gusts) that a UAV is subjected to, or an object that a manipulator is attempting to grasp. % We note that $E$ could also refer to a more abstract form of ``environment" such as the initial condition that the robot's state is initialized to.

Let $\E$ denote the space of all possible environments. We then make the following assumption.

\begin{assumption}
\label{assumption environments}
There is an underlying distribution $\D$ over $\E$ from which environments are drawn.
\end{assumption}

Importantly, we \emph{do not} assume that we have explicit descriptions of $\E$ or $\D$. Instead, we only assume indirect access to $\D$ in the form of a dataset $S = \{E_1, \dots, E_N\}$ of $N$ training environments drawn i.i.d. from $\D$. In Section \ref{sec:domain shift}, we will present an extension of our basic framework that allows us to relax this assumption and handle settings where training and test environments are drawn from different distributions. 

Let $g: \X \times \E \rightarrow \Y$ denote the robot's sensor mapping from a state $x$ and an environment $E$ to an observation $y = g(x; E) \in \mathcal{Y}$. \revision{Since we are interested in partially observable settings, we do not make any particular assumptions (e.g., injectivity or bijectivity) on the sensor mapping $g$.} Let $\pi: \Y \rightarrow \U$ denote a control policy that maps sensor measurements to control inputs. Note that this is a very general model and can capture control policies that depend on \emph{histories} of sensor measurements (by simply augmenting the state to keep track of histories of states and letting $\Y$ denote the space of histories of sensor measurements). 

We assume that the robot's desired behavior is encoded through a cost function. In particular, let $r_\pi: \E \rightarrow (\X \times \U)^T$ denote the function that ``rolls out" the system with control policy $\pi$, i.e., $r_\pi$ maps an environment $E$ to the state-control trajectory one obtains by applying the control policy $\pi$ (up to a time horizon $T$). We will assume that the environment captures all sources of stochasticity (including random initial conditions) and the rollout function for a \emph{particular} environment is thus deterministic (we discuss the case of stochastic rollouts in Section \ref{sec:stochastic rollouts}). 
We then let $C(r_\pi; E)$ denote the cost incurred by control policy $\pi$ when operating in environment $E$ over a time horizon $T$. We assume that the cost $C(r_\pi; E)$ is bounded and will assume (without further loss of generality) that $C(r_\pi; E) \in [0,1]$.
%We assume that the robot's desired behavior is encoded through a cost function $c: \X \times \U \times \E \rightarrow \RR$. Let $C(u; E)$ denote the cumulative cost incurred by control policy $u$ when operating in environment $E$ over a time horizon $H$:
%\begin{equation}
%\label{eq:cost}
%C(u;E) = \sum_{k=1}^H c(x_k, u_k; E).
%\end{equation}
We make the following important assumption in this work.

\begin{assumption}
\label{assumption}
Given any control policy $\pi$, we can compute the cost $C(r_\pi; E_i)$ for the training environments $E_1, \dots, E_N$.
\end{assumption}

This assumption is satisfied if one can simulate the robot's operation in the environments $E_1, \dots, E_N$. We note that computational considerations aside, we do not make any restrictions on the dynamics $f$ or the sensor mapping $g$ beyond the ability to simulate them. The models that our approach can handle are thus extremely rich in principle (e.g., nonlinear or hybrid dynamics, sensor models involving raycasting or simulated vision, etc.).  

Another possibility for satisfying Assumption \ref{assumption} is to run the policy $\pi$ on the hardware system itself in the given environments. This may be a feasible option for problems such as grasping, which are not safety-critical in nature. In such cases, our approach does not require models of the dynamics, sensor mapping, or the rollout function.

{\bf Goal:} Our goal is to design a control policy that minimizes the expected value of the cost $C$ across environments:
\begin{equation}
\label{eq:opt_det}
 \underset{\pi \in \Pi}{\textrm{min}} \  \ C_\D(\pi) :=  \underset{\pi \in \Pi}{\textrm{min}} \ \  \underset{E \sim \D}{\EE} \ [C(r_\pi; E)].
\end{equation}
In this work, it will be useful to consider a more general setting where we choose a \emph{distribution} $P$ over the control policy space $\Pi$ instead of making a single deterministic choice. This is because the PAC-Bayes bounds we use will assume this setting.
Our goal is then to solve the following optimization problem, which we refer to as $\OPT$:
\begin{flalign*}
&& C^\star := \underset{P \in \PP}{\textrm{min}} \  \ C_\D(P) := \underset{P \in \PP}{\textrm{min}} \  \  \underset{E \sim \D}{\EE} \ \underset{\pi \sim P}{\EE} [C(r_\pi; E)], &&& (\OPT)
\end{flalign*}
where $\PP$ denotes the space of probability distributions over $\Pi$. Note that the outer expectation here is taken with respect to the \emph{unknown} distribution $\D$. This constitutes the primary challenge in tackling this problem. 

%{\bf Goal:} Our goal is to design control policies that minimize the expected value of the cost $C$ across environments:
%\begin{equation}
%\label{eq:opt}
%C^\star = \underset{u \in \UU}{\textrm{min}} \quad \underset{E \sim \D}{\EE} \ [C(u; E)],
%\end{equation}
%where $\UU$ is a family of control policies.
%Note that the expectation here is taken with respect to the \emph{unknown} distribution $\D$.

\section{Background}
\label{sec:background}

The primary technical framework we leverage in this paper is PAC-Bayes theory. In Section \ref{sec:pac bayes learning}, we provide a brief overview of the key results from PAC-Bayes theory in the context of supervised learning. We first provide some brief background on the properties of the Kullback-Leibler (KL) divergence in Section \ref{sec:kl divergence} and show how we can compute its inverse using Relative Entropy Programming (REP) in Section \ref{sec:kl inverse}.

\subsection{KL divergence}
\label{sec:kl divergence}

Given two discrete probability distributions $P$ and $Q$ defined over a common set, the KL divergence from Q to P is defined as
\begin{equation}
\KL(P \| Q) := \sum_i P[i] \log \Bigg( \frac{P[i]}{Q[i]} \Bigg).
\end{equation}
For scalars $p,q \in [0,1]$, we define
\begin{equation}
\label{eq:kl bernoulli}
\KL(p \| q) := \KL(B(p) \| B(q)) = p \log \frac{p}{q} + (1 - p) \log \frac{1 - p}{1 - q},
\end{equation}
where $B(p)$ denotes a Bernoulli distribution on $\{0,1\}$ with parameter (i.e., mean) $p$. 

For distributions P and Q of a continuous random variable, the KL divergence is defined to be
\begin{equation}
\KL(P\|Q)=\int p(x)\,\log {\frac {p(x)}{q(x)}}\,dx,
\end{equation}
where $p$ and $q$ denote the densities of $P$ and $Q$. Importantly, if $P$ and $Q$ correspond to normal distributions $N_p = \mathcal{N}(\mu_p, \Sigma_p)$ and $N_q = \mathcal{N}(\mu_q, \Sigma_q)$ over $\RR^d$, the KL divergence can be computed in closed form as
\begin{equation}
\label{eq:KL normal}
\KL(N_p \| N_q) = \frac{1}{2} \Bigg( \textrm{Tr}(\Sigma_q^{-1} \Sigma_p) + (\mu_q - \mu_p)^T \Sigma_q^{-1} (\mu_q - \mu_p) + \log \frac{\textrm{det}(\Sigma_q)}{\textrm{det}(\Sigma_p)} - d \Bigg).
\end{equation}

\subsubsection{Computing KL inverse using Relative Entropy Programming}
\label{sec:kl inverse}

PAC-Bayes bounds (Section \ref{sec:pac bayes learning}) are typically expressed as bounds on a quantity $q^\star \in [0,1]$ of the form $\KL(p \| q^\star) \leq c$ (for some $p \in [0,1]$ and $c \geq 0$). These bounds can then be used to upper bound $q^\star$ by the \emph{KL inverse} as follows:
\begin{equation}
\label{eq:kl inverse}
q^\star \leq \KL^{-1} (p \| c) := \sup \{ q \in [0,1] \ | \ \KL(p \| q) \leq c \}.
\end{equation}

In prior work on PAC-Bayes theory, the KL inverse was numerically approximated using local root-finding techniques such as Newton's method \citep{Dziugaite17, Dziugaite17a}, which do not have a priori guarantees on convergence to a global solution. Here we observe that the KL inverse is readily expressed as the optimal value of a simple Relative Entropy Program (ref. Section \ref{sec:related work}). In particular, the expression for the KL inverse in \eqref{eq:kl inverse} corresponds to an optimization problem with a (scalar) decision variable $q$, a linear cost function (i.e., $-q$), linear inequality constraints (i.e., $0 \leq q \leq 1$), and a constraint on the KL divergence between the decision variable $q$ and the constant $p$.
We can thus compute the KL inverse exactly (up to numerical tolerances) using convex optimization (e.g., interior point methods \citep{Chandrasekaran17}).

\subsection{PAC-Bayes Theory in Supervised Learning}
\label{sec:pac bayes learning}

We now provide a brief overview of the key results from PAC-Bayes theory in the context of supervised learning.
Let $\Z$ be an input space and $\Z'$ be a set of labels. Let $\D$ be the (unknown) true distribution on $\Z$. Let $\HH$ be a hypothesis class consisting of functions $h_w: \Z \rightarrow \Z'$ parameterized by $w \in \RR^d$ (e.g., neural networks parameterized by weights $w$). Let $l: \HH \times \Z \rightarrow \RR$ be a loss function\footnote{Note that we are considering a slightly restricted form of the supervised learning problem where each input $z \in \Z$ has only one correct label $z' \in \Z'$. The loss thus only depends on the input $z$ and the label $h_w(z)$. The PAC-Bayes framework applies to the more general setting where there is an underlying true distribution on $\Z \times \Z'$ and the loss thus has the form $l: \HH \times \Z \times \Z' \rightarrow \RR$. However, the more restricted setting is sufficient for our needs here.}. We will denote by $\PP$ the space of probability distributions on the parameter space $\RR^d$. Informally, we will refer to distributions on $\HH$ when we mean distributions over the underlying parameter space.

PAC-Bayes analysis then applies to learning algorithms that output a \emph{distribution} over hypotheses. Specifically, the PAC-Bayes framework applies to learning algorithms with the following structure:
\begin{enumerate}
\item Choose a \emph{``prior" distribution} $P_0 \in \PP$ \emph{before} observing any data.
\item Observe training data samples $S = \{z_i\}_{i=1}^N$ and choose a \emph{posterior distribution} $P \in \PP$. This posterior can depend on the data and the prior. % (but need not).
\end{enumerate}
It is important to note that the posterior distribution $P$ \emph{need not} be the Bayesian posterior. PAC-Bayes theory applies to \emph{any} distribution $P$.

Let us denote the training loss associated with the posterior distribution $P$ as:
\begin{equation}
l_S(P) := \frac{1}{N} \sum_{z \in S} \underset{w \sim P}{\EE} [l(h_w; z)],
\end{equation}
and the true expected loss as:
\begin{equation}
l_\D(P) := \underset{z \sim \D}{\EE} \ \underset{w \sim P}{\EE} [l(h_w; z)].
\end{equation}
The following theorem is the primary result from PAC-Bayes theory\footnote{The bound we state here is due to Maurer \citep{Maurer04} and improves slightly upon the original PAC-Bayes bounds \citep{McAllester99}. The stated bound holds when costs are bounded in the range $[0,1]$ (as assumed here) and we have $N \geq 8$ samples.
% This is clearly stated in http://www.jmlr.org/papers/volume11/seldin10a/seldin10a.pdf
}. 

\begin{theorem}[PAC-Bayes Bound for Supervised Learning \citep{McAllester99, Maurer04}]
\label{thm:pac bayes kl}
For any $\delta \in (0,1)$, with probability at least $1 - \delta$ over samples $S \sim \D^N$, the following inequality holds:
\begin{equation}
\label{eq:pac bayes kl}
\KL(l_S(P) \| l_\D(P)) \leq \frac{\KL(P \| P_0) + \log(\frac{2 \sqrt{N}}{\delta})}{N}.
\end{equation}
\end{theorem}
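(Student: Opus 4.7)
The plan is to combine three classical ingredients: a moment generating function (MGF) bound for the binary KL of any single fixed hypothesis, the Donsker-Varadhan change-of-measure inequality to transfer the bound from the data-independent prior $P_0$ to an arbitrary (data-dependent) posterior $P$, and Markov's inequality to convert an expectation over $S \sim \D^N$ into a high-probability statement.

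First I would fix a hypothesis $h_w$ and treat the per-sample losses $l(h_w; z_i) \in [0,1]$ as i.i.d.\ bounded random variables with true mean $l_\D(h_w)$ and empirical mean $l_S(h_w)$. The key technical input due to Maurer states that for $N \geq 8$,
$$\underset{S \sim \D^N}{\EE}\bigl[\exp\bigl(N \, \KL(l_S(h_w) \| l_\D(h_w))\bigr)\bigr] \leq 2\sqrt{N}.$$
Taking expectation over $w \sim P_0$ and swapping the order by Fubini preserves this bound, which is legitimate precisely because $P_0$ is chosen before any data is observed. Markov's inequality applied to the resulting outer expectation over $S$ then gives, with probability at least $1-\delta$,
$$\underset{w \sim P_0}{\EE}\bigl[\exp\bigl(N \, \KL(l_S(h_w) \| l_\D(h_w))\bigr)\bigr] \leq \frac{2\sqrt{N}}{\delta}.$$

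Next I would invoke the Donsker-Varadhan variational representation of KL: for any measurable $\phi$ and any distributions with $P \ll P_0$,
$$\underset{w \sim P}{\EE}[\phi(w)] \leq \KL(P \| P_0) + \log \underset{w \sim P_0}{\EE}[\exp(\phi(w))].$$
Choosing $\phi(w) = N \, \KL(l_S(h_w) \| l_\D(h_w))$ and substituting the bound from the previous display yields
$$\underset{w \sim P}{\EE}\bigl[N \, \KL(l_S(h_w) \| l_\D(h_w))\bigr] \leq \KL(P \| P_0) + \log \frac{2\sqrt{N}}{\delta}.$$
Since the binary KL is jointly convex in its two arguments and since $l_S(P) = \underset{w \sim P}{\EE}[l_S(h_w)]$ and $l_\D(P) = \underset{w \sim P}{\EE}[l_\D(h_w)]$, Jensen's inequality gives $\KL(l_S(P) \| l_\D(P)) \leq \underset{w \sim P}{\EE}[\KL(l_S(h_w) \| l_\D(h_w))]$. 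Dividing through by $N$ then recovers the claimed inequality.

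The hard part is Maurer's MGF lemma. A direct Chernoff/Hoeffding argument only controls the probability of an empirical deviation at a fixed threshold $\epsilon$, giving a bound of the form $e^{-N \epsilon^2}$; obtaining the tight $2\sqrt{N}$ prefactor inside the MGF (rather than a larger polynomial in $N$) requires integrating this Chernoff estimate over all achievable values of the empirical mean and bounding the resulting integral carefully via a beta-function-type computation. This is precisely the refinement that distinguishes Maurer's constant from the weaker prefactor in the original McAllester bound; every other step above is standard PAC-Bayes machinery that plugs this MGF estimate into the change-of-measure framework.
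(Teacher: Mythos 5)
Your proposal is correct: the paper does not prove this theorem itself but imports it from the cited literature (McAllester's bound with Maurer's refinement), and your argument — Maurer's moment bound $\EE_{S \sim \D^N}[\exp(N\,\KL(l_S(h_w)\|l_\D(h_w)))] \leq 2\sqrt{N}$ for $N \geq 8$, Fubini over the data-independent prior, Markov's inequality, the Donsker--Varadhan change of measure to the posterior, and Jensen via joint convexity of the binary KL — is exactly the canonical proof underlying that citation, including the correct identification of the $N \geq 8$ requirement noted in the paper's footnote. No gaps beyond the deliberately deferred proof of Maurer's lemma, which you correctly flag as the one nontrivial ingredient.
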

Here, $\KL(l_S(P) \| l_\D(P))$ is interpreted as a KL divergence between Bernoulli distributions and computed using \eqref{eq:kl bernoulli} (this is meaningful since $l_S(P)$ and $l_\D(P)$ are scalars bounded within $[0,1]$). 

Intuitively, Theorem \ref{thm:pac bayes kl} provides a bound on how ``close'' the training loss $l_S(P)$ and the true expected loss $l_\D(P)$ are. However, in practice, one would like to find an \emph{upper bound} on the true expected loss $l_\D(P)$. Such an upper bound can be obtained by computing the KL inverse (ref. Section \ref{sec:kl inverse}):
\begin{equation}
\label{eq:pac bayes learning kl version}
l_\D(P) \leq \KL^{-1}\Big(l_S(P) \| \frac{\KL(P \| P_0) + \log(\frac{2 \sqrt{N}}{\delta})}{N} \Big).
\end{equation}

Another upper bound that is useful for the purpose of optimization is provided by the following corollary, which follows from Theorem \ref{thm:pac bayes kl} by applying the well-known upper bound for the KL inverse one obtains by applying Pinsker's inquality: $\KL^{-1}(p \| c) \leq p + \sqrt{c/2}$.

\begin{corollary}[PAC-Bayes Upper Bound for Supervised Learning \citep{McAllester99, Maurer04}]
\label{thm:pac bayes learning}
For any $\delta \in (0,1)$, with probability at least $1 - \delta$ over samples $S \sim \D^N$, the following inequality holds:
\begin{equation}
\label{eq:pac bayes upper bound}
\underbrace{l_\D(P)}_{\text{True expected loss}} \leq \underbrace{l_S(P)}_{\text{Training loss}} + \underbrace{\sqrt{\frac{\KL(P \| P_0) + \log(\frac{2 \sqrt{N}}{\delta})}{2N}}}_{\text{``Regularizer"}}.
\end{equation}
\end{corollary}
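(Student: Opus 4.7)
The plan is to derive Corollary 1 directly from Theorem 1 by converting the KL-divergence bound into an explicit upper bound on $l_\D(P)$, using Pinsker's inequality as the intermediate tool. Since Theorem 1 is already in hand, no probabilistic argument needs to be redone; the entire derivation happens on the event (of probability at least $1-\delta$) on which the conclusion of Theorem 1 holds.

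First, I would fix a sample $S$ for which Theorem 1 gives
\[
\KL\!\bigl(l_S(P)\,\|\,l_\D(P)\bigr) \leq c, \qquad c := \frac{\KL(P\|P_0) + \log(2\sqrt{N}/\delta)}{N}.
\]
Second, I would rewrite this as the statement $l_\D(P) \leq \KL^{-1}(l_S(P)\,\|\,c)$, using the definition of the KL inverse in \eqref{eq:kl inverse} together with the fact that $l_S(P), l_\D(P) \in [0,1]$ (both are expectations of a loss bounded in $[0,1]$).

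Third, I would establish the deterministic scalar inequality
\[
\KL^{-1}(p\,\|\,c) \leq p + \sqrt{c/2} \qquad \text{for all } p \in [0,1],\ c \geq 0.
\]
This is where Pinsker's inequality enters: the Bernoulli version of Pinsker says that for any $p,q \in [0,1]$,
\[
2(q - p)^2 \leq \KL(p\,\|\,q),
\]
so if $\KL(p\,\|\,q) \leq c$ then $q - p \leq \sqrt{c/2}$, and hence $q \leq p + \sqrt{c/2}$. Taking the supremum over all admissible $q$ yields the claimed bound on $\KL^{-1}(p\,\|\,c)$.

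Finally, I would chain these: applying the scalar inequality with $p = l_S(P)$ and the value of $c$ above gives
\[
l_\D(P) \leq \KL^{-1}\!\bigl(l_S(P)\,\|\,c\bigr) \leq l_S(P) + \sqrt{c/2},
\]
which is exactly \eqref{eq:pac bayes upper bound}. The only nontrivial ingredient is Pinsker's inequality for Bernoulli distributions, and this is a standard fact; the rest is bookkeeping about the direction of the inequality (we only need the one-sided bound $l_\D(P) - l_S(P) \leq \sqrt{c/2}$, not two-sided closeness). The probability $1-\delta$ is inherited verbatim from Theorem 1 since no further randomness enters the argument.
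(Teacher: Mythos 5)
Your proposal is correct and follows the same route the paper indicates: apply Theorem \ref{thm:pac bayes kl} and then convert the KL-divergence statement into an explicit bound via the Pinsker-derived inequality $\KL^{-1}(p \| c) \leq p + \sqrt{c/2}$. You simply spell out the Bernoulli Pinsker step that the paper cites without detail, and your bookkeeping (one-sided bound, probability inherited verbatim) is accurate.
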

Corollary \ref{thm:pac bayes learning} provides a strategy for choosing a distribution $P$ over hypotheses with a provable guarantee on generalization: minimize the right hand side (RHS) of inequality \eqref{eq:pac bayes upper bound} consisting of the training loss and a ``regularization" term. % This is the key idea behind the approach taken in \citep{Dziugaite17} for training (stochastic) neural nets with generalization guarantees. We will use a similar idea to synthesize controllers with guaranteed performance.

\section{PAC-Bayes Control}
\label{sec:pac bayes control}

% The key technical idea of our approach for designing control policies that generalize well to novel environments is to prevent ``overfitting" the synthesized controller to the sample environments $E_1, \dots, E_N$. The resulting method provides a guarantee on the performance of the synthesized controller in the form of an \emph{upper bound} on the true expected cost $C^\star$ (ref. equation \eqref{eq:opt}). In order to achieve this, we exploit the following analogy between control and learning:

We now describe our approach for adapting the PAC-Bayes framework in order to tackle the policy learning problem $\OPT$ and synthesize (stochastic) control policies with guaranteed expected performance across novel environments. 
Our key idea for doing this is to exploit a precise analogy between the supervised learning setting from Section \ref{sec:pac bayes learning} and the policy learning setting described in Section \ref{sec:problem formulation}. Table \ref{tab:analogy} presents this relationship.  
   
\begin{table}[ht]
\centering
\begin{tabular}{@{\extracolsep{4pt}}llcccll}
\toprule   
\multicolumn{2}{c}{Supervised Learning}  & & & & \multicolumn{2}{c}{Policy Learning}\\
 \cmidrule{1-2} 
 \cmidrule{6-7} 
Input data & $z \in \Z$ & & & & Environment & $E \in \E$ \\ 
Hypothesis & $h_w: \Z \rightarrow \Z'$ & &  {\Huge $\leftarrow$} & & Rollout function & $r_\pi: \E \rightarrow (\X \times \U)^H$ \\ 
Loss  & $l(h_w; z)$ & & & & Cost & $C(r_\pi; E)$ \\
\bottomrule
\end{tabular}
\vspace{7pt}
\caption{A reduction from the control policy learning problem we consider here to the supervised learning setting.} 
\label{tab:analogy}
\end{table}

One can think of the relationship in Table \ref{tab:analogy} as providing a \emph{reduction} from the policy learning problem $\OPT$ to a supervised learning problem. We are provided input data in the form of a data set of example environments. Choosing a ``hypothesis" corresponds to choosing a control policy $\pi$ (since the rollout function $r_\pi$ is determined by $\pi$). A ``hypothesis" maps an environment $E$ to a ``label", corresponding to the state-control trajectory obtained by applying $\pi$ on $E$. This ``label" incurs a loss $C(r_\pi; E)$. 

We can use this reduction to translate the PAC-Bayes theorems for supervised learning (Theorem \ref{thm:pac bayes kl} and Corollary \ref{thm:pac bayes learning}) to the control setting. Similar to the supervised learning setting, we assume that the space $\Pi$ of control policies is parameterized by $w \in \RR^d$. This in turn produces a parameterization of rollout functions. With a slight abuse of notation, we will refer to rollout functions $r_w$ instead of $r_\pi$ (with the understanding that $w$ is the parameter vector for the control policy $\pi$).

Let $P_0$ be a ``prior" distribution over the parameter space $\RR^d$ chosen before seeing any example environments. The prior can be used to encode domain knowledge, but need not be ``true'' in any Bayesian sense (i.e., bounds will hold for any prior). Let $P$ be a (possibly data-dependent) ``posterior". Following the notation from Section \ref{sec:problem formulation}, we denote the true expected cost across environments by $C_\D(P)$. We will denote the cost on the training environments as
\begin{equation}
C_S(P) := \frac{1}{N} \sum_{E \in S} \underset{w \sim P}{\EE} [C(r_w; E)].
\end{equation}
The following theorem is then an exact analogy of Corollary \ref{thm:pac bayes learning}.

\begin{theorem}[PAC-Bayes Bound for Control Policies]
\label{thm:pac bayes control}
For any $\delta \in (0,1)$, with probability at least $1 - \delta$ over sampled environments $S \sim \D^N$, the following inequality holds:
\begin{equation}
\label{eq:pac inequality control}
\underbrace{C_\D(P)}_{\text{True expected cost}} \leq \ C_{\textrm{PAC}}(P) := \underbrace{C_S(P)}_{\text{Training cost}} + \underbrace{\sqrt{\frac{\KL(P \| P_0) + \log(\frac{2 \sqrt{N}}{\delta})}{2N}}}_{\text{``Regularizer"}}.
\end{equation}
\end{theorem}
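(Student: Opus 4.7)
The plan is to prove Theorem~\ref{thm:pac bayes control} by direct reduction to Corollary~\ref{thm:pac bayes learning}, exploiting exactly the correspondence laid out in Table~\ref{tab:analogy}. First I would construct a supervised learning instance in which the input space is $\Z := \E$, the underlying data distribution is $\D$ (the same environment distribution), the parameterized hypothesis class $\HH$ consists of rollout functions $r_w$ induced by policies $\pi_w$ for $w \in \RR^d$, and the loss function is defined by $l(h_w; E) := C(r_w; E)$. Under this identification, the empirical quantity $C_S(P)$ is exactly $l_S(P)$ and the population quantity $C_\D(P)$ is exactly $l_\D(P)$.

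Next I would verify that the hypotheses of Corollary~\ref{thm:pac bayes learning} are met for this constructed instance: (i) the sample $S = \{E_1, \dots, E_N\}$ is drawn i.i.d.\ from $\D$ by Assumption~\ref{assumption environments}, matching the i.i.d.\ sampling requirement; (ii) the loss $l(h_w; E) = C(r_w; E)$ takes values in $[0,1]$ by the boundedness assumption on the cost made in Section~\ref{sec:problem formulation}; (iii) the prior $P_0$ and (possibly data-dependent) posterior $P$ are defined on the shared parameter space $\RR^d$, exactly as required by the PAC-Bayes setup. Applying Corollary~\ref{thm:pac bayes learning} to this instance yields the inequality $l_\D(P) \le l_S(P) + \sqrt{(\KL(P\|P_0)+\log(2\sqrt{N}/\delta))/(2N)}$ with probability at least $1-\delta$ over $S \sim \D^N$; translating back through the reduction gives \eqref{eq:pac inequality control}.

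The only point that requires genuine care --- and it is essentially the whole content of the argument --- is checking that the reduction is faithful, i.e., that nothing in the derivation of Theorem~\ref{thm:pac bayes kl} or Corollary~\ref{thm:pac bayes learning} uses structure of the supervised learning problem beyond (i)--(iii) above. The most plausible sticking point is the form of the loss: in the supervised learning statement the loss is written $l(h_w; z)$, corresponding (per the footnote in Section~\ref{sec:pac bayes learning}) to the restricted setting in which each input has a unique label. The control cost $C(r_w; E)$ fits precisely this restricted form, since the cost depends on $E$ both through the trajectory $r_w(E)$ and (possibly) through $E$ itself, just as $l(h_w; z)$ depends on $z$ both through $h_w(z)$ and directly. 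Hence no extension of Corollary~\ref{thm:pac bayes learning} is needed and I do not expect any real obstacle; the theorem is a restatement of the supervised learning bound under a change of notation, with $N \ge 8$ inherited implicitly from Maurer's bound.
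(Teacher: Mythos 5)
Your proposal is correct and takes essentially the same route as the paper: the paper's proof is exactly the reduction of Table \ref{tab:analogy} followed by an application of Corollary \ref{thm:pac bayes learning}, which you spell out (identifying $\Z$ with $\E$, $l(h_w;z)$ with $C(r_w;E)$, and checking the i.i.d.\ sampling, boundedness, and prior/posterior requirements). Your extra care about the restricted one-label-per-input form of the loss matches the footnote in Section \ref{sec:pac bayes learning} and introduces no deviation from the paper's argument.
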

\begin{proof}
The proof follows immediately from Corollary \ref{thm:pac bayes learning} given the reduction in Table \ref{tab:analogy}.
% \qed
\end{proof}

This theorem will constitute our primary tool for learning policies with guarantees on their expected performance across novel environments. In particular, the left hand side of inequality \eqref{eq:pac inequality control} is the cost function $C_\D(P)$ of the optimization problem $\OPT$. Theorem \ref{thm:pac bayes control} thus provides an upper bound (that holds with probability $1 - \delta$) on the true expected performance across environments of any policy distribution $P$ in terms of the loss on the sampled environments in $S = \{E_i\}_{i=1}^N$ and a ``regularizer". Our approach for choosing $P$ is to minimize this upper bound. Algorithm \ref{a:pac bayes control} outlines the steps involved in our approach. 

We note that while $P$ is chosen by optimizing $C_{\textrm{PAC}}(P)$ (i.e., the RHS of inequality \eqref{eq:pac inequality control}), the final upper bound $C_{\textrm{bound}}^\star$ on $C_\D(P)$ is not computed as $C_{\textrm{PAC}}(P^\star_\text{PAC})$. While this is a valid upper bound, a tighter bound is provided by inequality \eqref{eq:pac bayes learning kl version}. The observations made in Section \ref{sec:kl inverse} allow us to compute this final bound using a REP. This is the bound we report in the results presented in Section \ref{sec:examples}.

\begin{algorithm}[h]
	\algsetup{linenosize=\normalsize}
  \small
  \caption{{\small PAC-Bayes Policy Learning}}
  \label{a:pac bayes control}
  \begin{algorithmic}[1]
    \STATE Fix prior distribution $P_0 \in \PP$ over policies
    \STATE {\bf Inputs:} $S = \{E_1, \dots, E_N\}$: Training environments, $\delta$: Probability threshold  
    \STATE {\bf Outputs:}
    \STATE $P^\star_\text{PAC} = \underset{P \in P}{\textrm{argmin}} \ C_{\textrm{PAC}}(P) := \frac{1}{N} \sum_{E \in S} \underset{w \sim P}{\EE} [C(r_w; E)] + \sqrt{\frac{\KL(P \| P_0) + \log(\frac{2 \sqrt{N}}{\delta})}{2N}}$ 
    \STATE $C_{\textrm{bound}}^\star := \KL^{-1}\Big(C_S(P^\star_\text{PAC}) \| \frac{\KL(P^\star_\text{PAC} \| P_0) + \log(\frac{2 \sqrt{N}}{\delta})}{N} \Big)$
  \end{algorithmic}
\end{algorithm}

\section{Computing PAC-Bayes Control Policies}
\label{sec:computing pac-bayes controllers}

We now describe how to tackle the optimization problem in Algorithm \ref{a:pac bayes control} for minimizing the upper bound on the true expected cost. We will first discuss the setting where the control policy space $\Pi$ is finite (Section \ref{sec:finite policy space}). For this setting, the optimization problem can be solved to global optimality via Relative Entropy Programming. We then tackle the more general setting where $\Pi$ is continuously parameterized in Section \ref{sec:continuous policy space}.

\subsection{Finite Control Policy Space}
\label{sec:finite policy space}

Let the space of policies be $\Pi = \{\pi_1, \dots, \pi_L\}$. Our goal is then to optimize a \emph{discrete} probability distribution $P$ (with corresponding probability vector $p$) over the space $\Pi$. Thus, $p[j]$ denotes the probability assigned to policy $\pi_j$. Define a matrix $\hat{C}$ of costs, where each element
\begin{equation}
\hat{C}[i,j] = C(r_{\pi_j}; E_i)
\end{equation}
corresponds to the cost incurred on environment $E_i \in S$ by policy $\pi_j \in \Pi$ (recall that Assumption \ref{assumption} implies that we can compute each $\hat{C}[i,j]$). The training cost from inequality \eqref{eq:pac inequality control} can then be written as:
\begin{equation}
\frac{1}{N} \sum_{E \in S} \underset{\pi \sim P}{\EE} [C(r_\pi; E)] = \frac{1}{N} \sum_{i=1}^N \sum_{j=1}^L \hat{C}[i,j] p[j] := \bar{C}p,
\end{equation}
where the matrix $\bar{C}$ is defined as:
\begin{equation}
\bar{C} := \frac{1}{N} {\bf 1}^T \hat{C}.
\end{equation}
Here, ${\bf 1}$ is the all-ones vector of size $N \times 1$. We note that finding a vector $p$ that minimizes the training cost corresponds to solving a \emph{linear program}.

Minimizing the PAC-Bayes upper bound $C_{\textrm{PAC}}(P)$ corresponds to solving the following optimization problem:

\begin{flalign}
		\label{opt:finite}
		& & \underset{p \in \RR^L}{\textrm{min}} \hspace*{1cm} & \bar{C}p + \sqrt{\frac{\KL(p \| p_0) + \log(\frac{2 \sqrt{N}}{\delta})}{2N}} && \\
		& & \text{s.t.} \hspace*{1cm} & 0 \leq p \leq 1, \ \sum_j p[j] = 1. && \nonumber
\end{flalign}
This optimization problem can be \emph{equivalently} reformulated via an \emph{epigraph constraint} \citep{Boyd04} as:
\begin{flalign}
		& & \underset{p \in \RR^L, \tau}{\textrm{min}} \hspace*{1cm} & \tau && \nonumber \\
		& & \text{s.t.} \hspace*{1cm} & \tau \geq \bar{C}p + \sqrt{\frac{\KL(p \| p_0) + \log(\frac{2 \sqrt{N}}{\delta})}{2N}} && \nonumber \\
		& & &  0 \leq p \leq 1, \ \sum_j p[j] = 1. \nonumber 
\end{flalign}
We further rewrite the problem as:
\begin{flalign}
		& & \underset{p \in \RR^L, \tau, \lambda}{\textrm{min}} \hspace*{1cm} & \tau && \label{opt:pac bayes final}  \\
		& & \text{s.t.} \hspace*{1cm} & \lambda^2 \geq \frac{\KL(p \| p_0) + \log(\frac{2 \sqrt{N}}{\delta})}{2N} && \nonumber \\
		& & & \lambda = \tau - \bar{C}p, \ \lambda \geq 0 && \nonumber \\
		& & &  0 \leq p \leq 1, \ \sum_j p[j] = 1. \nonumber 
\end{flalign}

Our key observation here is that for a \emph{fixed} $\lambda = \lambda_0$, the above problem is a Relative Entropy Program (REP) since it consists of minimizing a linear cost function subject to linear equality and inequality constraints and an additional inequality constraint of the form $\KL(p \| p_0) \leq \textrm{constant}$. 

We note that $\lambda \in [0,1]$ since $\lambda = \tau - \bar{C}p$, where $\tau \in [0,1]$ (because $\tau$ upper bounds the true expected cost) and $\bar{C}p \in [0,1]$ (recall that we assumed that costs are bounded between $0$ and $1$). In order to solve problem \eqref{opt:pac bayes final} to global optimality, we can thus simply search over the one-dimensional parameter $\lambda \in [0,1]$  (e.g., by simply discretizing the interval $[0,1]$, performing a bisection search, etc.) and find the setting of $\lambda$ that leads to the lowest optimal value for the corresponding REP.

\subsection{Continuously-Parameterized Control Policy Space}
\label{sec:continuous policy space}

We now consider policies $\pi_w$ parameterized by the vector $w \in \RR^d$ (e.g., neural networks parameterized by weights). We will consider stochastic policies defined by probability distributions over the parameters $w$. Here, we choose Gaussian distributions $w \sim \mathcal{N}(\mu, \Sigma)$ with diagonal covariance $\Sigma = \text{diag}(s)$ (with $s \in \RR_+^d$) and use the shorthand $\N_{\mu, s} := \N(\mu, \text{diag}(s))$. Using Gaussians makes computations easier since we can express the KL divergence between Gaussians in closed form (see Section \ref{sec:kl divergence}). We can then apply Algorithm \ref{a:pac bayes control} and choose $\mu, s$ to minimize the PAC-Bayes upper bound $C_{\textrm{PAC}}(\N_{\mu, s})$. In order to turn this into a practical algorithm, there are two primary issues we need to address. 

First, in order to minimize the bound $C_{\textrm{PAC}}(\N_{\mu, s})$, one would like to apply gradient-based methods (e.g., stochastic gradient descent). However, the cost function may not be a differentiable function of the parameters $w$. 
% For example, in the case of designing controllers for grasping, a natural (but non-differentiable) cost function is the one that assigns $0$ cost if an object is successfully grasped and a cost of $1$ for a failed grasp.
For example, in the case of designing obstacle avoidance policies, a natural (but non-differentiable) cost function is the one that assigns a cost of $1$ if the robot collides (and 0 otherwise).
To tackle this issue, we employ a differentiable surrogate for the cost function during optimization (note that the final bound is still evaluated for the original cost function). This surrogate will necessarily depend on the application at hand; we present examples in the contexts of obstacle avoidance and grasping in Section \ref{sec:examples}.

The second challenge is the fact that computing the training cost $C_S(\N_{\mu,s})$ requires computing the following expectation over policies:
\begin{equation}
\label{eq:contcase expectation over controllers}
\underset{w \sim \N_{\mu, s}}{\EE} [C(r_w; E)].
\end{equation}
For most realistic settings, this expectation cannot be computed in closed form. We address this issue in a manner similar to \citep{Dziugaite17}. In particular, in order to optimize $\mu$ and $s$ using gradient descent, we take gradient steps with respect to the following unbiased estimator of $C_S(\N_{\mu,s})$:
\begin{equation}
\label{eq:unbiased estimate}
\frac{1}{N} \sum_{E \in S} C(r_{\mu + \sqrt{s} \odot \xi} ; E), \quad \xi \sim  \N_{0, I_d}.
\end{equation}
In other words, in each gradient step we use an i.i.d. sample of $\xi$ and compute the gradient of \eqref{eq:unbiased estimate} with respect to $\mu$ and $s$.

At the end of the optimization procedure, we fix the optimal $\mu^\star$ and $s^\star$ and estimate the training cost $C_S(P) = C_S(\N_{\mu^\star,s^\star})$ by producing a large number of samples $w_1, \dots, w_L$ drawn from $\N_{\mu^\star, s^\star}$:
\begin{equation}
\hat{C}_S(\N_{\mu^\star,s^\star}) := \frac{1}{NL} \sum_{E \in S} \sum_{i=1}^L C(r_{w_i} ; E).
\end{equation}
We can then use a sample convergence bound (see \citep{Langford02}) to bound the error between $\hat{C}_S(\N_{\mu^\star, s^\star})$ and $C_S(\N_{\mu^\star, s^\star})$. In particular, the following bound is an application of the relative entropy version of the Chernoff bound for random variables (i.e., costs) bounded in $[0,1]$ and holds with probability $1 - \delta'$:
\begin{equation}
\label{eq:sample bound}
C_S(\N_{\mu^\star, s^\star}) \leq \bar{C}_S(\N_{\mu^\star, s^\star}; L, \delta') := \KL^{-1}(\hat{C}_S(\N_{\mu^\star, s^\star}) \| \frac{1}{L} \log(\frac{2}{\delta'})).
\end{equation}
Combining inequalities \eqref{eq:pac bayes kl} and \eqref{eq:sample bound} using the union bound, we see that the following bound holds with probability at least $1 - \delta - \delta'$:
\begin{equation}
\label{eq:contcase KL inv}
C_\D(\N_{\mu^\star,s^\star}) \leq C_{\textrm{bound}}^\star:= \KL^{-1} \Bigg(\bar{C}_S(\N_{\mu^\star,s^\star}; L, \delta') \| \frac{\KL(\N_{\mu^\star,s^\star} \| P_0) + \log(\frac{2 \sqrt{N}}{\delta})}{N}\Bigg).
\end{equation}
This is the final version of our bound on the expected performance of policies (drawn from $\N_{\mu^\star,s^\star}$). % This is the bound we report in the results presented in Section \ref{sec:grasping}. 

Algorithm \ref{a:pac bayes SGD} summarizes our approach from this section. Note that in order to ensure positivity of $s \in \RR_+^d$, we perform the optimization with respect to $\eta := \log(s)$. % and $\rho := \log(\lambda)$.

\begin{algorithm}[h]
	\algsetup{linenosize=\normalsize}
  \small
  \caption{{\small PAC-Bayes Policy Learning via Gradient Descent}}
  \label{a:pac bayes SGD}
  \begin{algorithmic}[1]
    \STATE {\bf Inputs:} 
    \STATE $S = \{E_1, \dots, E_N\}$: Training environments 
    \STATE $\delta, \delta' \in (0,1)$: Probability thresholds
    \STATE $P_0$: Prior over policies
    \STATE $\mu, s \in \RR^d$: Initializations for $\mu$ and $s$
    \STATE $\gamma$: step size for gradient descent
    
    \STATE {\bf Outputs:}  \\
    \STATE $\mu^\star, s^\star$: Optimal $\mu, s$
    \STATE $C_{\textrm{bound}}^\star:= \KL^{-1} \Big(\bar{C}_S(\N_{\mu^\star,s^\star}; L, \delta') \| \frac{\KL(\N_{\mu^\star,s^\star} \| P_0) + \log(\frac{2 \sqrt{N}}{\delta})}{N}\Big)$
    
    \STATE {\bf Procedure:}
    % \STATE $\mu \leftarrow \mu_0$, $\eta \leftarrow \log(s_0)$
    \STATE $B(\mu, s, w) := \frac{1}{N} \sum_{E \in S} C(r_w ; E) + \sqrt{\frac{\KL(\N_{\mu^\star,s^\star} \| P_0) + \log(\frac{2 \sqrt{N}}{\delta})}{2N}}$
    \WHILE{$\neg$converged}
    \STATE Sample $\xi \sim  \N_{0, I_d}$ and set $w \leftarrow \mu + \sqrt{s} \odot \xi$
    \STATE $\mu \leftarrow \mu - \gamma \nabla_\mu B(\mu, \exp(\eta), w)$
    \STATE $\eta \leftarrow \eta - \gamma \nabla_\eta B(\mu, \exp(\eta), w)$
    \STATE $s \leftarrow \exp(\eta)$
    \ENDWHILE
  \end{algorithmic}
\end{algorithm}

\section{Extensions}
\label{sec:extensions}

In this section, we present two extensions to the basic framework presented so far. In Section \ref{sec:stochastic rollouts}, we discuss extensions to systems with stochastic dynamics or sensor measurements. In Section \ref{sec:domain shift}, we present an approach that allows us to tackle settings where training and test environments are drawn from different distributions. 

\subsection{Stochastic Rollout Functions}
\label{sec:stochastic rollouts}

In our problem formulation in Section \ref{sec:problem formulation}, we assumed that the rollout function $r_w: \mathcal{E} \rightarrow (\mathcal{X} \times \mathcal{U})^H$ is deterministic (i.e., once the environment is fixed, the resulting state-action trajectory obtained by applying a given policy is completely determined). Here we briefly sketch an extension of our framework to settings where the rollout function is stochastic (e.g., due to stochasticity in the dynamics of the system or in sensor measurements). This is made possible by a reinterpretation of the variable $w$. Previously, $w$ corresponded to parameters of the control policy. Suppose now that we think of $w$ as consisting of two components $w \coloneqq [w_\text{int}, w_\text{ext}]$; an ``internal" component $w_\text{int}$ corresponding to parameters of the control policy (just as before), and an additional ``external" component corresponding to uncertain parameters (e.g., external disturbances that the robot might experience). The rollout function now has the following structure: $r_{[w_\text{int}, w_\text{ext}]}: \mathcal{E} \rightarrow (\mathcal{X} \times \mathcal{U})^H$. The stochasticity in $w_\text{int}$ is directly set by us (i.e., by choosing a prior $P_0$ and posterior $P$ as before). However, the stochasticity over $w_\text{ext}$ is beyond our control. 

We note that the structure of the resulting problem is identical to the original formulation considered in Section \ref{sec:problem formulation}. The only difference comes from the fact that a portion of the stochasticity in the rollouts is beyond our control. We can thus directly apply Theorem \ref{thm:pac bayes control} in order to obtain an upper bound on the true expected cost. In particular, let $P_0$ and $P$ be the prior and posterior over $w_\text{int}$ (as before) and suppose that the distribution over $w_\text{ext}$ is given by $P_\text{ext}$. Further, assume that $w_\text{int}$ and $w_\text{ext}$ are independent random variables. We can then define $P_0'$ and $P'$ to be the prior and posterior distributions over $w \coloneqq [w_\text{int}, w_\text{ext}]$ and evaluate the ``regularizer" term in the PAC-Bayes bound in Theorem \ref{thm:pac bayes control} by noting that:
\begin{align}
\KL(P' \| P_0') \ &=\revision{\EE_{P,P_\text{ext}}\Bigg{[}\log\bigg{(}\frac{P_\text{ext} P}{P_\text{ext} P_0}\bigg{)}\Bigg{]} =\EE_{P,P_\text{ext}}\Bigg{[}\log\bigg{(}\frac{P}{P_0} \bigg{)}\Bigg{]}} \label{eq:indep_assumption1} \\
&=\revision{\EE_{P}\Bigg{[}\log\frac{P}{P_0}\Bigg{]}\underbrace{\EE_{P_\text{ext}}[1]}_{=1}} = \KL(P \| P_0). \label{eq:indep_assumption2}
\end{align}  
\revision{The equality between the two lines} follows from the fact that $w_\text{int}$ and $w_\text{ext}$ are independent. In order to evaluate the training cost $C_S(P') = \frac{1}{N} \sum_{E \in S} \EE_{w \sim P'} [C(r_w; E)]$, we can employ the sampling procedure described in Section \ref{sec:continuous policy space} (i.e., sampling the disturbances $w_\text{ext} \sim P_\text{ext}$ in a manner analogous to how $w$ was sampled in Section \ref{sec:continuous policy space}). Thus, the framework for the deterministic rollout setting can be applied with almost no modifications in order to handle the stochastic rollout case (as long as one can sample disturbances $w_\text{ext}$ and assuming that the disturbances are drawn independently of $w_\text{int}$).

\subsection{Distributionally-Robust Control Policies}
\label{sec:domain shift}

So far, we have assumed that the robot will be tested on environments that are drawn from the same distribution as the training environments. We will now address the setting where this assumption is not valid and learn \emph{distributionally-robust policies} (i.e., policies that are robust to changes in the distribution from which environments are drawn). We will assume that the distribution $\D'$ from which test environments are drawn is bounded in terms of an $f$-\emph{divergence} (see below) from the training distribution $\D$ and formulate a robust version of the PAC-Bayes bound already described. 
\begin{definition}[$f$-divergence between $\D'$ and $\D$ \citep{Nguyen10}] For any convex $f(x)$ such that $f(1) = 0$, let
\begin{equation}
D_f(\D'||\D) := \underset{E\sim\D}{\EE}\Bigg{[}f\Bigg{(}\frac{\D'}{\D}\Bigg{)}\Bigg{]}.
\end{equation}
\end{definition}
The $f$-divergences encapsulate a broad class of divergences between distributions and include the KL divergence as a special case (with $f(x) = x\log x$). 
We will assume that the test distribution $\D'$ is bounded in terms of an $f$-divergence: $D_f(\D'||\D) \leq \mathcal{B}$ (but no further assumption on $\D'$ will be made). The control policy we learn will have an associated guarantee on \emph{any} test distribution that satisfies this assumption. 

\begin{theorem}[$f$-divergence between $\D'$ and $\D$ in terms of $f$ and $f^*$ \citep{Nguyen10}]%%%%%%%%%%%%%%%%%%%%%%%%%%%
	 For a given $f(x)$ and its convex conjugate $f^*(y) \coloneqq \sup_{x \in \mathbb{R}} \big{[}xy - f(x)\big{]}$, we can write the $f$-divergence $D_f(\D'||\D)$ in terms of only $f$ and its conjugate:
\begin{equation}
D_f(\D'||\D) = \sup_{C:\Pi \times \mathcal{E} \rightarrow \mathbb{R}}\bigg{(}\underset{E \sim \D'}{\EE} \ \underset{w \sim P}{\EE} [C(r_w; E)] 
 - \underset{E \sim \D}{\EE} \ \underset{w \sim P}{\EE} [f^*(C(r_w; E))]\bigg{)}.
\end{equation}
\end{theorem}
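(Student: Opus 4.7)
The plan is to reduce the claim to the classical Fenchel--Rockafellar variational representation of $f$-divergences, applied to a lifted pair of joint distributions on $\mathcal{E} \times \Pi$.

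First, I would derive (or invoke) the basic variational identity
\begin{equation*}
D_f(Q \,\|\, R) \;=\; \sup_{T : \mathcal{E} \to \RR}\, \{\EE_{Q}[T(E)] - \EE_{R}[f^*(T(E))] \}.
\end{equation*}
The argument starts from the Fenchel--Moreau biconjugate identity $f(x) = \sup_{y \in \RR}(xy - f^*(y))$, which holds because $f$ is convex and may be assumed lower semicontinuous. Substituting $x = dQ/dR$ pointwise inside $D_f(Q\,\|\,R) = \EE_{R}[f(dQ/dR)]$, then interchanging the pointwise supremum with the expectation over $R$ via a measurable-selection argument (so that $y$ is replaced by a measurable function $T(E)$), and finally performing the change of measure $\EE_R[T \cdot dQ/dR] = \EE_Q[T]$, yields the formula.

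Next, I would lift the problem to a pair of joint distributions on $\mathcal{E} \times \Pi$ that differ only in the $\mathcal{E}$-marginal. Set $\mu := \D' \otimes P$ and $\nu := \D \otimes P$. Because the two joints share the common marginal $P$ on $\Pi$, their Radon--Nikodym derivative depends only on $E$ and equals $d\D'/d\D$; consequently
\begin{equation*}
D_f(\mu \,\|\, \nu) \;=\; \EE_{(E,w) \sim \nu}\!\left[ f\!\left(\tfrac{d\D'}{d\D}(E)\right) \right] \;=\; \EE_{E \sim \D}\!\left[ f\!\left(\tfrac{d\D'}{d\D}(E)\right) \right] \;=\; D_f(\D' \,\|\, \D).
\end{equation*}
Applying the variational identity from the first step to $(\mu,\nu)$ with test functions of the form $T(E,w) = C(r_w;E)$ for $C : \Pi \times \mathcal{E} \to \RR$, and splitting the joint expectations under $\mu$ and $\nu$ as iterated expectations $\EE_{\D'}\EE_P$ and $\EE_{\D}\EE_P$ respectively, produces exactly the supremum on the right-hand side of the theorem. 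Combining with the equality $D_f(\mu\|\nu) = D_f(\D'\|\D)$ completes the derivation.

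The main technical subtlety is the measurable-selection step used to interchange the pointwise supremum and the expectation when establishing the variational formula; this is standard under mild regularity (Polish spaces, Borel measurability of $f^*$), and I would simply defer to \citep{Nguyen10} for the rigorous treatment. A secondary point is checking that the class of test functions of the form $(w,E) \mapsto C(r_w;E)$ is rich enough to attain the supremum over all measurable $T: \mathcal{E} \times \Pi \to \RR$; this holds provided the parameterization $w \mapsto \pi_w$ is effectively injective on the support of $P$, which is implicit in the setup of Section \ref{sec:continuous policy space}. Neither issue requires additional ideas beyond those already present in \citep{Nguyen10}, so the core of the proof is simply the two-line reduction above.
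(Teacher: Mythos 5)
The paper offers no proof of this statement (it is quoted directly from Nguyen et al.), and your argument—Fenchel–Moreau biconjugation with a measurable-selection step to interchange supremum and expectation, followed by lifting to the product measures $\D'\otimes P$ and $\D\otimes P$, whose likelihood ratio depends only on $E$—is precisely the standard derivation in that reference, so your proposal is correct and matches the intended route. One small simplification: the ``effective injectivity'' of $w\mapsto\pi_w$ you flag is not needed, because the optimal test function $f'\bigl(d\D'/d\D\bigr)$ depends only on $E$ and is therefore realized by a $C$ that is constant in its policy argument, while the Fenchel--Young inequality already bounds the objective for \emph{every} composed test function above by $D_f(\D'\|\D)$, so the supremum over the composed class equals the divergence regardless of how the parameterization behaves.
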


The supremum above is taken over all functions $C$ that result in the expectations in the RHS being finite. Thus, for any particular (cost) function $C$, we obtain a lower bound on the supremum term. This allows us to obtain the following useful corollary. 
\begin{corollary} [$f$-Divergence variational inequality] If $D_f(\D'||\D) \leq \mathcal{B}$, then
\label{thm:fdiv var ineq}
\begin{equation}
C_{\D'}(P) := \underset{E \sim \D'}{\EE} \ \underset{w \sim P}{\EE} [C(r_w; E)] \leq \mathcal{B} + \underset{E \sim \D}{\EE} \ \underset{w \sim P}{\EE} [f^*(C(r_w; E))].
\end{equation}
\end{corollary}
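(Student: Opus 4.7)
The plan is to apply the variational representation of the $f$-divergence from the preceding theorem directly, instantiated at the specific cost function $C$ of interest, and then rearrange.

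First I would note that the variational identity
\[
D_f(\D'\|\D) \;=\; \sup_{C'} \Bigl(\underset{E\sim\D'}{\EE}\,\underset{w\sim P}{\EE}[C'(r_w;E)] \;-\; \underset{E\sim\D}{\EE}\,\underset{w\sim P}{\EE}[f^*(C'(r_w;E))]\Bigr)
\]
holds as a supremum over all admissible functions $C'$, where admissible means the two expectations on the right are finite. In particular, dropping the supremum and plugging in our specific bounded cost function $C:\Pi\times\mathcal{E}\to[0,1]$ (which is admissible because $C$ is bounded, so $f^*(C(\cdot))$ is also bounded and the expectations exist), we obtain the pointwise inequality
\[
\underset{E\sim\D'}{\EE}\,\underset{w\sim P}{\EE}[C(r_w;E)] \;-\; \underset{E\sim\D}{\EE}\,\underset{w\sim P}{\EE}[f^*(C(r_w;E))] \;\leq\; D_f(\D'\|\D).
\]

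Second, I would invoke the hypothesis $D_f(\D'\|\D)\leq \mathcal{B}$ to upper bound the right-hand side by $\mathcal{B}$, and rearrange terms to isolate $C_{\D'}(P)=\EE_{E\sim\D'}\EE_{w\sim P}[C(r_w;E)]$ on the left. This gives exactly the claimed inequality
\[
C_{\D'}(P) \;\leq\; \mathcal{B} + \underset{E\sim\D}{\EE}\,\underset{w\sim P}{\EE}[f^*(C(r_w;E))].
\]

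There is essentially no obstacle beyond verifying admissibility: since $C$ is bounded in $[0,1]$ and $f^*$ is convex (hence continuous and bounded on any compact interval of its effective domain that intersects $[0,1]$), both expectations are finite, so $C$ lies in the feasible set over which the supremum is taken. The result is therefore an immediate corollary of the preceding variational theorem, with the only content being the substitution of the specific cost $C$ and the monotonicity step that replaces $D_f(\D'\|\D)$ by its upper bound $\mathcal{B}$.
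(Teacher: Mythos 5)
Your proposal is correct and follows essentially the same route as the paper: instantiate the variational representation of $D_f(\D'\|\D)$ at the specific cost $C$, drop the supremum to get a lower bound, then apply $D_f(\D'\|\D) \leq \mathcal{B}$ and rearrange. The added remark on admissibility (finiteness of the expectations for bounded $C$) is a harmless elaboration of what the paper states implicitly when describing the feasible set of the supremum.
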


Note that this corollary is valid for any $f$-divergence. In particular, it holds when $f(x) = x\log x$, making $f^*(y) = e^{y-1}$. With this choice of $f$, we obtain an upper bound on $C_{\D'}(P)$ in terms of the bound $\mathcal{B}$ on the KL divergence. 

\begin{corollary} [KL divergence variational inequality]
\label{cor:kl var ineq}
For $f$-divergence with $f(x) = x\log x$ and $D_f(\D'||\D) \leq \mathcal{B}$, we have $D_f(\D'||\D) = \KL(\D'||\D) \leq \mathcal{B}$ and 

\begin{equation}
\label{eq:kl var ineq}
C_{\D'}(P) \leq \mathcal{B} + \underset{E \sim \D}{\EE} \ \underset{w \sim P}{\EE} [e^{C(r_w; E)}] - 1.
\end{equation}
\end{corollary}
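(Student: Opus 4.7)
The plan is to verify both parts of the corollary directly, using the definition of the $f$-divergence for the first and a specialization of Corollary \ref{thm:fdiv var ineq} for the second.

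For the identification $D_f(\D'||\D) = \KL(\D'||\D)$, I would substitute $f(x) = x\log x$ into the definition of the $f$-divergence and perform a change of measure:
\begin{equation*}
D_f(\D'||\D) = \EE_{E\sim\D}\Bigg[\frac{\D'(E)}{\D(E)}\log\frac{\D'(E)}{\D(E)}\Bigg] = \EE_{E\sim\D'}\Bigg[\log\frac{\D'(E)}{\D(E)}\Bigg] = \KL(\D'||\D),
\end{equation*}
so the hypothesis $D_f(\D'||\D) \leq \mathcal{B}$ immediately yields $\KL(\D'||\D) \leq \mathcal{B}$.

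For the variational bound \eqref{eq:kl var ineq}, the key step is to identify the correct representative of $f$ so that the conjugate matches the form $e^y - 1$ appearing in the claim. A direct computation of the conjugate of $f(x) = x\log x$ (setting $y - \log x - 1 = 0$, giving $x^\star = e^{y-1}$) yields $f^*(y) = e^{y-1}$, which only produces a bound of the shape $\mathcal{B} + e^{-1}\EE_\D\EE_P[e^{C}]$ when plugged into Corollary \ref{thm:fdiv var ineq}. To recover the stated form, I would instead use the equivalent shifted representative $\tilde{f}(x) \coloneqq x\log x - x + 1$, which is still convex, still satisfies $\tilde{f}(1) = 0$, and induces the same $f$-divergence since $\EE_\D[\D'/\D - 1] = 0$. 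A short calculation (the first-order condition $y - \log x = 0$ gives $x^\star = e^y$) shows that $\tilde{f}^*(y) = e^y - 1$.

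Applying Corollary \ref{thm:fdiv var ineq} with $\tilde{f}$ (whose $f$-divergence bound is still $\mathcal{B}$ by the previous paragraph) then delivers
\begin{equation*}
C_{\D'}(P) \leq \mathcal{B} + \EE_{E\sim\D}\EE_{w\sim P}\bigl[\tilde{f}^*(C(r_w;E))\bigr] = \mathcal{B} + \EE_{E\sim\D}\EE_{w\sim P}\bigl[e^{C(r_w;E)}\bigr] - 1,
\end{equation*}
which is precisely \eqref{eq:kl var ineq}. The only real subtlety is recognizing that the closed form stated in the corollary corresponds to the shifted representative of $f$; equivalently, one could apply the underlying variational inequality of Corollary \ref{thm:fdiv var ineq} to the shifted cost $C + 1$ in place of $C$, which passes a $+1$ through both sides and converts the $e^{-1}$ prefactor into an additive $-1$.
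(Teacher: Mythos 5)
Your proof is correct, and it is in fact more careful than the paper's own treatment of this corollary. The paper gives no explicit proof: it simply remarks that taking $f(x) = x\log x$ yields $f^*(y) = e^{y-1}$ and asserts the stated bound, but plugging $f^*(y) = e^{y-1}$ directly into Corollary \ref{thm:fdiv var ineq} only gives $C_{\D'}(P) \leq \mathcal{B} + e^{-1}\,\EE_{E\sim\D}\EE_{w\sim P}[e^{C(r_w;E)}]$, which is not the form \eqref{eq:kl var ineq} (and neither bound dominates the other uniformly on $[0,1]$). Your resolution --- replacing $f$ by the shifted representative $\tilde{f}(x) = x\log x - x + 1$, checking that it is convex with $\tilde{f}(1)=0$ and induces the same divergence because $\EE_\D[\D'/\D - 1] = 0$, and computing $\tilde{f}^*(y) = e^y - 1$ --- is exactly the missing detail needed to obtain the stated constant $-1$; your closing observation that this is equivalent to applying the underlying variational inequality with the test function $C+1$ is also a clean alternative. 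Both your argument and the paper's rest on the same variational characterization (Theorem \ref{thm:dv var ineq}'s precursor, Corollary \ref{thm:fdiv var ineq}), so the route is essentially the paper's intended one, but you supply the step the paper glosses over; your change-of-measure verification that $D_f = \KL$ for this $f$ is likewise correct.
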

While Corollary \ref{cor:kl var ineq} provides a valid inequality in the special case of the KL divergence, a tighter bound can be obtained using the \emph{Donsker-Varadhan (DV)} inequality.

\begin{theorem}[Donsker-Varadhan variational inequality \citep{Donsker75}; Theorem 3.2 in \citep{Gray11}]
\label{thm:dv var ineq}
If $\KL(\D'||\D) \leq \mathcal{B}$, then
\begin{equation}
\label{eq:DK-ineq bound} 
C_{\D'}(P) \leq \mathcal{B} + \log \Big{(}\underset{E \sim \D}{\EE} \ \underset{w \sim P}{\EE} [e^{C(r_w; E)}]\Big{)}.
\end{equation}
\end{theorem}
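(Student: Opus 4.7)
The plan is to apply the Donsker-Varadhan variational principle in its standard form to a single-variable function on $\mathcal{E}$, then use Jensen's inequality to bring the inner expectation over policies inside the exponential. Concretely, I would first define the bounded measurable function $\phi(E) := \EE_{w \sim P}[C(r_w; E)]$, obtained by integrating out the policy draws. Since $C$ takes values in $[0,1]$, so does $\phi$, and by Fubini $C_{\D'}(P) = \EE_{E \sim \D'}[\phi(E)]$.

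Next, I would invoke the classical Donsker-Varadhan lemma (the form cited from \citep{Gray11, Donsker75}), which says that for any two probability measures $Q \ll P$ on a common space and any bounded measurable $\psi$, one has $\EE_Q[\psi] \leq \KL(Q \| P) + \log \EE_P[e^{\psi}]$. Applied with $Q = \D'$, $P = \D$, and $\psi = \phi$, this yields
\begin{equation*}
C_{\D'}(P) = \EE_{E \sim \D'}[\phi(E)] \leq \KL(\D'\|\D) + \log \EE_{E \sim \D}[e^{\phi(E)}].
\end{equation*}
The hypothesis $\KL(\D'\|\D) \leq \mathcal{B}$ then bounds the first term on the right by $\mathcal{B}$.

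The remaining step is to pass from $e^{\phi(E)} = e^{\EE_{w \sim P}[C(r_w; E)]}$ to the quantity $\EE_{w \sim P}[e^{C(r_w; E)}]$ that appears in the target bound. Since the exponential is convex, Jensen's inequality gives, pointwise in $E$, the estimate $e^{\EE_{w \sim P}[C(r_w; E)]} \leq \EE_{w \sim P}[e^{C(r_w; E)}]$. Taking expectations under $\D$ and using monotonicity of the logarithm yields $\log \EE_{E \sim \D}[e^{\phi(E)}] \leq \log \EE_{E \sim \D} \EE_{w \sim P}[e^{C(r_w; E)}]$, which chained with the previous inequality gives exactly \eqref{eq:DK-ineq bound}.

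I do not expect a serious obstacle here: the argument is a direct assembly of well-known inequalities, and the boundedness of $C$ on $[0,1]$ ensures that all quantities in sight are finite and that absolute continuity $\D' \ll \D$ is implicit in the finiteness of the KL divergence. The only mild subtlety is the Jensen step, which introduces some slack compared with Corollary \ref{cor:kl var ineq}; this slack is what makes the DV bound preferable, since $\log(\EE[X]) \leq \EE[X] - 1$ is sharp at $\EE[X]=1$ but loose elsewhere, and the DV form avoids the additive $e^{C}-1$ penalty by exploiting the variational characterization of $\KL$ rather than an $f$-divergence inequality.
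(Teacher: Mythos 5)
Your proof is correct. The paper itself gives no derivation of this theorem --- it simply cites the classical Donsker--Varadhan lemma --- and your argument is the standard assembly of that lemma with Jensen's inequality: DV applied to $\phi(E) = \EE_{w\sim P}[C(r_w;E)]$, then convexity of the exponential to move the inner expectation inside. One remark: the Jensen step can be avoided entirely by applying the Donsker--Varadhan inequality on the product space $\E \times \RR^d$ to $\psi(E,w) = C(r_w;E)$ with the measures $\D'\otimes P$ and $\D\otimes P$; since $\KL(\D'\otimes P \,\|\, \D\otimes P) = \KL(\D'\|\D)$, this yields \eqref{eq:DK-ineq bound} directly, and it also shows that your intermediate bound $\mathcal{B} + \log \EE_{E\sim\D}\big[e^{\EE_{w\sim P}[C(r_w;E)]}\big]$ is in fact slightly tighter than the stated one. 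Your closing comment is a little muddled, though it does not affect the proof: the slack introduced by Jensen is not what makes the DV route preferable to Corollary \ref{cor:kl var ineq}; DV is preferable because $\log x \leq x-1$, and that advantage survives the Jensen relaxation.
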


The DV inequality provides a tighter bound than inequality \eqref{eq:kl var ineq} since $x-1 \geq \log(x), \ \forall x > 0$. For the rest of this section, we will specialize our discussion to the KL divergence and use the DV inequality. However, we note that our approach generalizes to any $f$-Divergence by leveraging Corollary \ref{thm:fdiv var ineq}. 

As written, inequality \eqref{eq:DK-ineq bound} cannot be used to directly upper bound $C_{\D'}(P)$ since \linebreak $\EE_{E \sim \D} \ \EE_{w \sim P} [e^{C(r_w; E)}]$ is not an observable quantity. However, we can leverage the inequality \eqref{eq:pac inequality control} to obtain an upper bound on $C_{\D'}(P)$ in terms of observable quantities. Since Theorem \ref{thm:pac bayes control} holds for any cost function between $0$ and $1$, we will be able to apply inequality \eqref{eq:pac inequality control} if we replace the cost with an exponentiated one, as long as we rescale to stay between 0 and 1. Thus, if we make the substitution 
$$C(r_w; E) \leftarrow \frac{e^{C(r_w; E)}-1}{e-1},$$
we obtain the following bound using inequality \eqref{eq:pac inequality control}:
\begin{equation}
\label{eq:transformed pac-bayes bound}
\underset{E \sim \D}{\EE} \ \underset{w \sim P}{\EE} [e^{C(r_w; E)}] \leq \frac{1}{N} \sum_{E \in S} \underset{w \sim P}{\EE} [e^{C(r_w; E)}] + (e-1)\sqrt{\frac{\KL(P \| P_0) + \log(\frac{2 \sqrt{N}}{\delta})}{2N}}.
\end{equation}
This inequality holds because the transformation keeps the cost between $[0,1]$. 
Now, since we assumed that $\KL(\D'||\D) \leq \mathcal{B}$, we can apply Theorem \ref{thm:dv var ineq} to bound $C_{\D'}(P)$.

\begin{corollary}[Distributionally-robust PAC-Bayes bound]
\label{cor:robust PAC-Bayes bound}
For any $\D'$ such that $\KL(\D'||D) \leq \mathcal{B}$ and any $\delta \in (0,1)$, with probability at least $1 - \delta$ over sampled environments $S \sim \D^N$ the following inequality holds:
\begin{equation}
\label{eq:cor robust PAC-Bayes bound}
C_{\D'}(P) \leq \mathcal{B} + \log \Bigg{(} \frac{1}{N} \sum_{E \in S} \underset{w \sim P}{\EE} [e^{C(r_w; E)}] + (e-1)\sqrt{\frac{\KL(P \| P_0) + \log(\frac{2 \sqrt{N}}{\delta})}{2N}} \Bigg{)}.
\end{equation}
\end{corollary}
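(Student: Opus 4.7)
The proof plan is to chain together two ingredients already established in the excerpt: the Donsker--Varadhan variational inequality (Theorem \ref{thm:dv var ineq}), which relates $C_{\D'}(P)$ to the moment generating function $\EE_{E \sim \D} \EE_{w \sim P}[e^{C(r_w; E)}]$ under the true (training) distribution, and the transformed PAC-Bayes bound \eqref{eq:transformed pac-bayes bound}, which gives an observable (i.e., computable from the training set) upper bound on that same moment generating function. These two bounds point in opposite directions (one goes from $\D'$ to $\D$, the other from $\D$ to $S$), and composing them via the monotonicity of $\log$ yields the claim.

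More concretely, here is the sequence of steps. First, since $\KL(\D' \| \D) \leq \mathcal{B}$ by hypothesis, I would invoke Theorem \ref{thm:dv var ineq} directly on the bounded cost $C(r_w; E) \in [0,1]$ (with $P$ playing no role in that theorem beyond fixing the random policy draw inside the expectation, which only affects the outer integrand) to obtain
\begin{equation*}
C_{\D'}(P) \;\leq\; \mathcal{B} + \log\Big(\underset{E \sim \D}{\EE}\,\underset{w \sim P}{\EE}\big[e^{C(r_w; E)}\big]\Big).
\end{equation*}
Second, I would justify inequality \eqref{eq:transformed pac-bayes bound} as an instance of Theorem \ref{thm:pac bayes control} applied to the rescaled cost $\widetilde{C}(r_w; E) := (e^{C(r_w; E)} - 1)/(e-1)$: since $C(r_w; E) \in [0,1]$, the function $x \mapsto (e^x - 1)/(e-1)$ maps $[0,1]$ monotonically onto $[0,1]$, so $\widetilde{C}$ satisfies the hypotheses of Theorem \ref{thm:pac bayes control}. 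The PAC-Bayes bound then gives, with probability at least $1 - \delta$ over $S \sim \D^N$,
\begin{equation*}
\underset{E \sim \D}{\EE}\,\underset{w \sim P}{\EE}\big[\widetilde{C}(r_w; E)\big] \;\leq\; \frac{1}{N}\sum_{E \in S} \underset{w \sim P}{\EE}\big[\widetilde{C}(r_w; E)\big] + \sqrt{\frac{\KL(P \| P_0) + \log(2\sqrt{N}/\delta)}{2N}}.
\end{equation*}
Multiplying through by $(e-1)$ and adding $1$ to both sides recovers inequality \eqref{eq:transformed pac-bayes bound} verbatim (the $+1$ terms cancel after clearing denominators, and the additive $1$ inside the rescaling gets absorbed into the $(e-1)$ coefficient on the square root).

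Third, I would substitute the PAC-Bayes upper bound on $\EE_{E \sim \D}\EE_{w \sim P}[e^{C(r_w; E)}]$ into the Donsker--Varadhan bound, using the fact that $\log$ is monotone increasing, to obtain exactly the right-hand side of \eqref{eq:cor robust PAC-Bayes bound}. The probability statement is inherited from the PAC-Bayes step, since the DV step is a deterministic inequality that holds for any $\D'$ satisfying the $\KL$ constraint.

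The main obstacle is really just the bookkeeping in step two: verifying that the substitution $C \leftarrow (e^C - 1)/(e - 1)$ produces a valid cost in $[0,1]$, and tracking the algebraic rearrangement that turns the PAC-Bayes bound on $\widetilde{C}$ into the bound \eqref{eq:transformed pac-bayes bound} on $e^{C}$. Nothing else is subtle, since Theorems \ref{thm:pac bayes control} and \ref{thm:dv var ineq} are already stated and the composition requires only the monotonicity of $\log$ and a single rescaling. In particular, the correctness of the probability level $1 - \delta$ follows directly from Theorem \ref{thm:pac bayes control}, with no union bound needed, because the distributional-robustness ingredient (DV) contributes no additional randomness.
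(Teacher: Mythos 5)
Your proposal is correct and follows essentially the same route as the paper: the paper derives inequality \eqref{eq:transformed pac-bayes bound} by applying Theorem \ref{thm:pac bayes control} to the rescaled cost $(e^{C}-1)/(e-1)$ and then chains it with the Donsker--Varadhan bound (Theorem \ref{thm:dv var ineq}) via monotonicity of $\log$, exactly as you describe. Your algebraic bookkeeping (the $-1$ terms cancelling after multiplying by $e-1$) and your observation that no union bound is needed since the DV step is deterministic are both accurate.
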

% Note that since $\mathcal{B}$ appears as an additive factor in equation \eqref{eq:cor robust PAC-Bayes bound}, when $\mathcal{B} \geq 1$, the robust PAC-Bound becomes vacuous. Specifically, $C_{\D'}(P) \geq 1$ means that nothing can be said about the performance of the controller on the test environments. 

The RHS of inequality \eqref{eq:cor robust PAC-Bayes bound} gives us an upper bound $C_{\text{PAC}'}$  on $C_{\D'}$. We can thus apply an analogous procedure to Algorithm \ref{a:pac bayes control} to obtain $P^\star_{\text{PAC}'}$ (a distributionally-robust stochastic policy) by minimizing this upper bound and $C^\star_{\text{bound}'}$ (the final distributionally-robust PAC-Bayes bound). 

In the finite policy space setting, we can apply a procedure similar to the one employed in Section \ref{sec:finite policy space} to write an REP that minimizes the bound $C_{\text{PAC}'}$. Define:
\begin{equation}
	\hat{C}_e[i,j] \coloneqq e^{C(r_{\pi_j}; E_i)}.
\end{equation}
We then have
\begin{equation}
\frac{1}{N} \sum_{E \in S} \underset{\pi \sim P}{\EE} [e^{C(r_\pi; E)}] = \frac{1}{N} \sum_{i=1}^N \sum_{j=1}^L \hat{C}_e[i,j] p[j] := \bar{C}_ep.
\end{equation}
We can then minimize the bound $C_{\text{PAC}'}$ using an REP analogous to Problem \eqref{opt:pac bayes final}: 
\begin{flalign}
		& & \underset{p \in \RR^L, \tau, \lambda}{\textrm{min}} \hspace*{1cm} & \tau &&
		  \\
		& & \text{s.t.} \hspace*{1cm} & \lambda^2 \geq (e-1)^2\frac{\KL(p \| p_0) + \log(\frac{2 \sqrt{N}}{\delta})}{2N} && \nonumber \\
		& & & \lambda = \tau - \bar{C}_ep, \ \lambda \geq 0 && \nonumber \\
		& & &  0 \leq p \leq 1, \ \sum_j p[j] = 1. \nonumber 
\end{flalign}
Here $\bar{C}_ep \in [1, e]$, and since $\tau$ upper bounds the true expected cost, we are only interested in values of $\tau \in [1, e]$. Thus an optimal $\lambda$ can be found by searching over $\lambda \in [0, e-1]$, which can then be used to obtain $P_{\text{PAC}'}^\star$ and $C_{\text{PAC}'}$. Additionally, in the continuously-parameterized control policy space case, we can make modifications to Algorithm \ref{a:pac bayes SGD} and equations (\ref{eq:contcase expectation over controllers}$-$\ref{eq:contcase KL inv}) to directly adapt the SGD approach to minimize $C_{\text{PAC}'}$. 

Finally, as in Section \ref{sec:computing pac-bayes controllers}, the final distributionally-robust upper bound  $C^\star_{\text{bound}'}$ is not computed as $C_{\text{PAC}'}(P^\star_{\text{PAC}'})$ but with an analogue to the KL inverse in equation \eqref{eq:pac bayes learning kl version}:
\begin{flalign}
		& & \underset{c_{\D'}, c_\D \in [0,1]}{\textrm{max}} \hspace*{1cm} & c_{\D'} && \\
		& & \text{s.t.} \hspace*{1cm} & \KL(C_S(P)||c_\D) \leq \frac{\KL(P \| P_0) + \log(\frac{2 \sqrt{N}}{\delta})}{N} && \nonumber \\
		& & & \KL(c_{\D'}||c_\D) \leq \mathcal{B}.  \nonumber
\end{flalign}
The first constraint is the same as in the non-robust case, and the second constraint accounts for the difference in the training and test distributions. Together these create an REP that can be solved to find $C^\star_{\text{bound}'}$.

%%Could add?
%%How to pick \beta (a heuristic of sorts) - asses what beta is and then choose controller which is most confident and we start with controllers for a handful of betas... but this doesn't work in this case
%%note about how the beta distribution is between 0 and 1 and that's what the KL divergence is wrt, but in general you can scale these but they will have different KL divergence
%%Note that the beta distribution has non-zero probability between $0$ and $1$. something about KL scaling when you scale radius or whatever
%%And with one final fact that $\KL(C_{\D'}(P)||C_\D(P)) \leq \KL(\D'||\D)$ \ref{} we can write the following corollary to bound $C_{\D'}(P)$:
%%In general this method holds, but in the nice case where the cost function takes only values of $0$ or $1$, this transformation is the same cost function but will still allows us to come up with corollary 2, the exponentiation is just a trick that allows us to combine equations (14) and DV ineq.

\section{Examples}
\label{sec:examples}

In this section, we demonstrate our framework in simulation on two domains: obstacle avoidance (Section \ref{sec:reactive obstacle avoidance control}) and grasping (Section \ref{sec:grasping}). Our goal is to demonstrate the ability of our approach to learn control policies with strong guarantees on generalization to novel environments. We will consider a hardware example in Section \ref{sec:hardware}.

\subsection{Reactive Obstacle Avoidance Control}
\label{sec:reactive obstacle avoidance control}
In this section, we apply our approach on the problem of learning reactive obstacle avoidance policies for a ground vehicle model equipped with a depth sensor. We first consider a finite policy space $\Pi$ and leverage the REP-based framework described in Section \ref{sec:finite policy space}. We then consider continuously parameterized policies and apply the approach from Section \ref{sec:continuous policy space}. Finally, we apply the approach from Section \ref{sec:domain shift} to learn distributionally-robust control policies. 

{\bf Dynamics.} A pictorial depiction of the ground vehicle model is provided in Figure \ref{fig:husky}. The state of the system is given by $[x,y,\psi]$, where $x$ and $y$ are the x and y positions of the vehicle respectively, and $\psi$ is the yaw angle. We model the system as a differential drive vehicle with the following nonlinear dynamics:
\begin{equation}
\left[ \begin{array}{c}
\dot{x} \\
\dot{y} \\
\dot{\psi} \end{array} \right] = 
 \left[ \begin{array}{c} 
-\frac{r}{2}(u_l + u_r)\sin(\psi) \\
\frac{r}{2}(u_l + u_r)\cos(\psi) \\
\frac{r}{L}(u_r - u_l) \end{array} \right],
\label{eq:husky dynamics}
\end{equation}
where $u_l$ and $u_r$ are the control inputs (corresponding to the left and right wheel speeds respectively), $r = 0.1$m corresponds to the radius of the wheels, and $L = 0.5$m corresponds to the width of the base of the vehicle. We set:
\begin{equation}
u_l = u_0 - u_{\text{diff}}, \quad u_r = u_0 + u_{\text{diff}},
\end{equation}
where $u_0 = v_0/r$ with $v_0 = 2.5$m/s. This ensures that the robot has a fixed speed $v_0$. We limit the turning rate by constraining $u_{\text{diff}} \in [-u_0/2, u_0/2]$. The system is simulated as a discrete-time system with time-step $\Delta t = 0.05$s. 
 
{\bf Obstacle environments.} A typical obstacle environment is shown in Figure \ref{fig:husky} and consists of $N_{\text{obs}}$ cylinders of varying radii along with three walls that bound the environment between $x \in [-5,5]$m and $y \in [0,10]$m. Environments are generated by first sampling the integer $N_{\text{obs}}$ uniformly between $20$ and $40$, and then independently sampling the x-y positions of the cylinders from a uniform distribution over the ranges $x \in [-5,5]$m and $y \in [2,10]$m. The radius of each obstacle is sampled independently from a uniform distribution over the range $[0.05, 0.2]$m. The robot's state is always initialized at $[x,y,\psi] = [0,1,0]$.

{\bf Obstacle Avoidance Policies.} We assume that the robot is equipped with a depth sensor that provides distances $y[i]$ along $20$ rays in the range $\theta[i] \in [-\pi/3, \pi/3]$ radians (positive is clockwise) up to a sensing horizon of $5$m (as shown in Figure \ref{fig:husky}). A given sensor measurement $y$ thus belongs to the space $\Y = \RR^{20}$. Let $\hat{y} = 1/y \in \RR^{20}$ be the inverse distance vector computed by taking an element-wise reciprocal of $y$. We then choose $u_{\text{diff}}$ as the following dot product:
\begin{equation}
\label{eq:controller}
u_{\text{diff}} = K \cdot \hat{y}. %  \sum_{i=1}^{20} c_i \frac{1}{y_i}.
\end{equation}
An example of $K \in \RR^{20}$ is:
\begin{equation}
\label{eq:K}
K[i] = 
\begin{cases} 
      (y_0/x_0)(x_0 - \theta[i]) & \text{if} \ \theta[i] \geq 0, \\
      (y_0/x_0)(-x_0 - \theta[i]) & \text{if} \ \theta[i] < 0.
   \end{cases}
\end{equation}
Such a $K$ is shown in Figure \ref{fig:K_vs_theta}. For $\theta[i] > 0$, $K[i]$ is a linear function of $\theta[i]$ with x- and y-intercepts equal to $x_0$ and $y_0$ respectively. This linear function is reflected about the origin for $\theta[i] < 0$. 

\begin{figure}[h]
% \revisionfig
\begin{center}
\includegraphics[width=0.8\columnwidth]{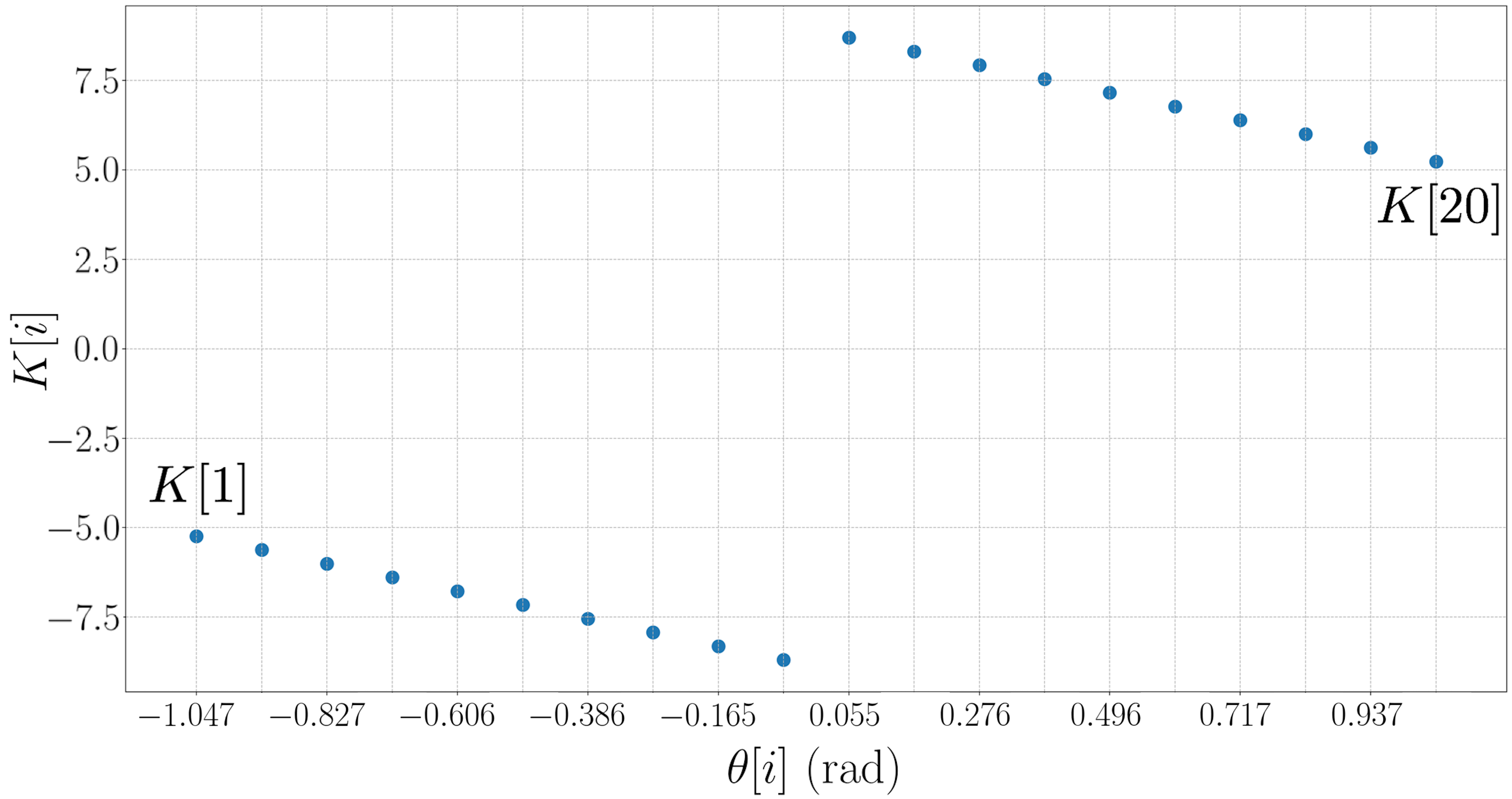}
\end{center} 
\caption{\footnotesize{Example of $K[i]$ as a function of $\theta[i]$.}  \label{fig:K_vs_theta}}
\end{figure}

Intuitively, this corresponds to a simple reactive policy that computes a weighted combination of inverse distances in order to turn away from obstacles that are close. As a simple example, consider the case where we have two obstacles: one located $4$m away along $\theta = -\pi/4$ (i.e., to the robot's left) and the other located $1$m away along $\theta = \pi/4$ (i.e., to the robot's right). The computed control input will then be $u_{\text{diff}} > 0$ (i.e., robot turns left) since the inverse depth for the obstacle to the right is larger than that of the obstacle to the left. Simple reactive policies of this kind have been shown to be quite effective in practice \citep{Arkin98,Beyeler09,Ross13,Conroy09}, but can often be challenging to tune by hand in order to achieve good expected performance across \emph{all} environments. We tackle this challenge by applying the PAC-Bayes control framework proposed here. 

{\bf Results (finite policy space).} In order to obtain a finite policy space, we choose $L = 50$ different $K$'s of the form \eqref{eq:K} by choosing different x and y intercepts $x_0$ and $y_0$. In particular, $(x_0, y_0)$ is chosen by discretizing the space $[0.1,5.0] \times [0,10.0]$ into 5 values for $x_0$ and 10 values for $y_0$. Our control policy space is thus $\Pi = \{\pi_1, \dots, \pi_L\}$, where each policy $\pi_i$ corresponds to a particular choice of $K$. 

We consider a time horizon of $T = 100$ and assign a cost of $1$ if the robot collides with an obstacle during this period and a cost of $0$ otherwise. We choose a uniform prior over the policy space $\Pi$ and apply the REP framework from Section \ref{sec:finite policy space} in order to optimize a distribution over policies. 
The PyBullet package \citep{Coumans18} is used to simulate the dynamics and depth sensor; we use these simulations to compute the elements of the cost matrix $\bar{C}$ (ref. Section \ref{sec:finite policy space}). Each simulation takes $\sim0.01$s to execute in our implementation (note that the computation of the different elements of $\bar{C}$ can be entirely parallelized). Given the matrix $\bar{C}$ with 100 sampled environments, each REP (corresponding to a fixed value of $\lambda$ in Problem \eqref{opt:pac bayes final}) takes $\sim0.05$s to solve using the CVXPY package \citep{CVXPY} and the SCS solver \citep{SCS}. We discretize the interval $[0,1]$ into 100 values to find the optimal $\lambda$. Complete code for this implementation is freely available on GitHub\footnote{Code: \href{https://github.com/irom-lab/PAC-Bayes-Control}{https://github.com/irom-lab/PAC-Bayes-Control}}.

Table \ref{tab:results obstacle avoidance} presents the upper bound $C^\star_{\text{bound}}$ on the true expected cost of the PAC-Bayes control policy $P^\star_\text{PAC}$ (ref. Algorithm \ref{a:pac bayes control}) for different sample sizes $N$ with $\delta = 0.01$. The table also presents an estimate of the true expected cost $C_\D(P^\star_\text{PAC})$ obtained by sampling $10^5$ environments. As the table illustrates, the PAC-Bayes bound provides strong guarantees even for relatively small sample sizes. For example, using only $100$ samples, the PAC-Bayes policy is guaranteed (with probability $1-\delta=0.99$) to have an expected success rate of $82.2\%$ (i.e., an expected cost of $0.178$). Exhaustive sampling indicates that the expected success rate for the PAC-Bayes policy is approximately $91.3\%$ for this case. Videos of representative trials on test environments can be found at \href{https://youtu.be/y4zTK79s1mI}{https://youtu.be/y4zTK79s1mI}. 
  
  \begin{table}[t!]
\small
\begin{center}
  \begin{tabular}{ | l | c | c | c | c | }
    \hline 
    N ($\#$ of training environments) & 100 & 500 & 1000 & 10000  \\ \hline
    \hline
    PAC-Bayes bound ($C^\star_{\text{bound}}$) &  0.178  & 0.135  & 0.121 & 0.096 \\ \hline
    True expected cost (estimate) &  0.087  & 0.084  & 0.088 & 0.083 \\
    \hline
  \end{tabular}
  \vspace{5pt}
    \caption[]{\footnotesize{Comparison of PAC-Bayes bound with the true expected cost (estimated by sampling $10^5$ obstacle environments). Using only 100 samples, with probability $0.99$ over samples, the PAC-Bayes policy is guaranteed to have an expected success rate of $82.2\%$. The true expected success rate is approximately $91.3\%$.}  \label{tab:results obstacle avoidance}}
  \end{center}
  \vspace{-10pt}
  \end{table}

{\bf Results (continuous policy space).} Next, we consider a continuously parameterized policy space $\Pi$ and apply the approach described in Section \ref{sec:continuous policy space}. In particular, we parameterize our policy using the matrix $K \in \RR^{20}$ in equation \eqref{eq:controller} while ensuring symmetry of the control law, i.e., we constrain $K[i] = -K[j]$ for $\theta[i] = -\theta[j]$ (note that $K$ is no longer constrained to have the linear form from equation \eqref{eq:K}). The dimensionality of the parameter space is thus $d = 10$. We apply Algorithm \ref{a:pac bayes SGD} to optimize a distribution $\N_{\mu^\star, s^\star}$ over policies. For the purpose of optimization, we employ a continuous surrogate cost function in place of the discontinuous 0-1 cost. We choose this to be the negative of the minimum distance to an obstacle along a trajectory (appropriately scaled to lie within $[0,1]$). Note that we employ this surrogate cost only for optimization; all results are presented for the 0-1 cost. Gradients in Algorithm \ref{a:pac bayes SGD} are estimated numerically. We choose a prior $P_0 = \N_{\mu_0, s_0}$ with $s_0 = 0.01$; the mean $\mu_0$ is given by a vector $K$ of the form \eqref{eq:K} with x-intercept $2.5$ and y-intercept $10.0$. 

We use $N = 100$ training environments and choose confidence parameters $\delta = 0.009$, $\delta' = 0.001$, and $L = 30,000$ samples to evaluate the sample convergence bound in equation \eqref{eq:sample bound}. Figure \ref{fig:K_vs_theta_continuous} shows the mean $\mu^\star$ of the optimized policy obtained using Algorithm \ref{a:pac bayes SGD}. 
The corresponding PAC-Bayes bound $C^\star_{\text{bound}}$ is $0.224$. Thus, with probability $0.99$ over sampled training data, the optimized PAC-Bayes policy is guaranteed to have an expected success rate of $77.6\%$. Exhaustive sampling with $10^5$ environments indicates that the expected success rate is approximately $92.5\%$. Videos of representative trials on test environments can be found at \href{https://youtu.be/y4zTK79s1mI}{https://youtu.be/y4zTK79s1mI}.

\begin{figure}[t]
% \revisionfig
\begin{center}
\includegraphics[width=0.8\columnwidth]{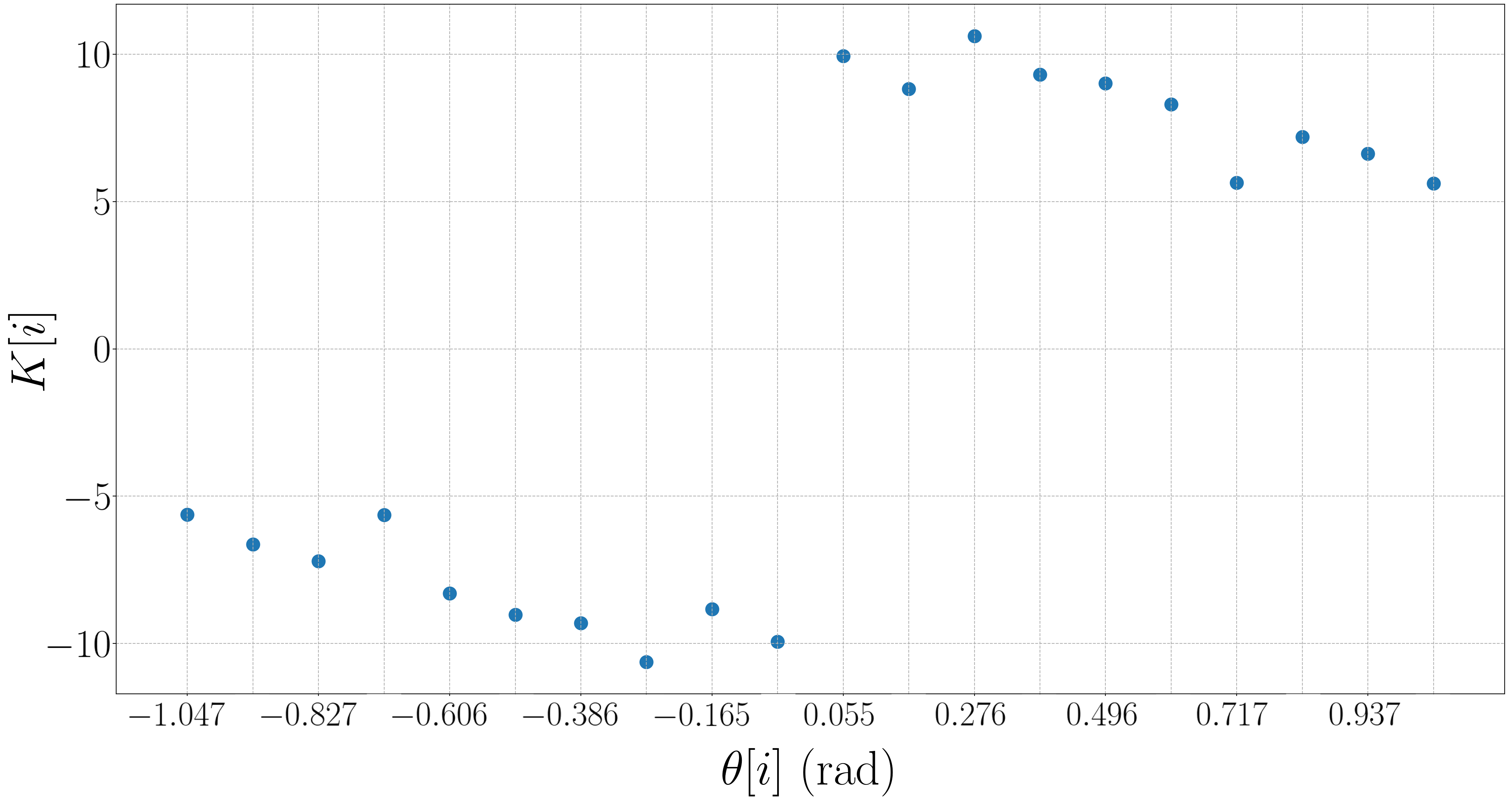}
\end{center} 
\caption{\footnotesize{Optimized $K$ corresponding to $\mu^\star$.}  \label{fig:K_vs_theta_continuous}}
\end{figure}

{\bf Results (distributionally-robust policies).} We now apply the approach presented in Section \ref{sec:domain shift} to learn distributionally-robust policies. 
Complete code for the implementation of the example here is freely available on GitHub\footnote{Code: \href{https://github.com/irom-lab/PAC-Bayes-Control/tree/master/Extension-Domain_Shifts}{https://github.com/irom-lab/PAC-Bayes-Control/tree/master/Extension-Domain\_Shifts}}.
 To provide a concrete way of bounding $\KL(\D'||\D)$, the training and test distributions differ only in the way that the radius of the cylindrical obstacles for that environment is sampled. For a single environment, all obstacles will have the same radius, but the beta distribution from which this radius is sampled differ. This means that $\KL(\D'||\D) = \KL(\mathbb{B}(\alpha', \beta')||\mathbb{B}(\alpha, \beta))$ where $\mathbb{B}(\alpha,\beta)$ is the beta distribution, with parameters $\alpha$ and $\beta$, used to sample the radius:

\begin{equation}
	\KL(\mathbb{B}(\alpha', \beta')||\mathbb{B}(\alpha, \beta)) = \log \bigg{(}\frac{\text{B}(\alpha,\beta)}{\text{B}(\alpha',\beta')}\bigg{)} + (\alpha' -\alpha)\psi(\alpha') + (\beta'-\beta)\psi(\beta') + (\alpha - \alpha' + \beta -\beta')\psi(\alpha' + \beta')
\end{equation}
where $\alpha$ and $\beta$ are the beta distribution parameters for $\D$, $\alpha'$ and $\beta'$ are the beta distribution parameters for $\D'$, $\text{B}(\cdot,\cdot)$ is the beta function (distinct from the beta distribution), and $\psi(\cdot)$ is the digamma function. This divergence can be computed analytically with a symbolic integrator such as Mathematica \citep{MATHEMATICA}. See Figure \ref{fig:pdf-beta} for the probability density functions of the distributions used to determine the radii of obstacles for this example, where $\KL(\D'||\D) \leq \mathcal{B} = 0.0819$. Note that for any test distribution over environments that satisfies the inequality $\KL(\D'||\D) \leq 0.0819$, the computed bound $C^\star_{\text{bound}'}$ will be valid.  

\begin{figure}[h]
 \centering
 \includegraphics[width=0.6\textwidth]{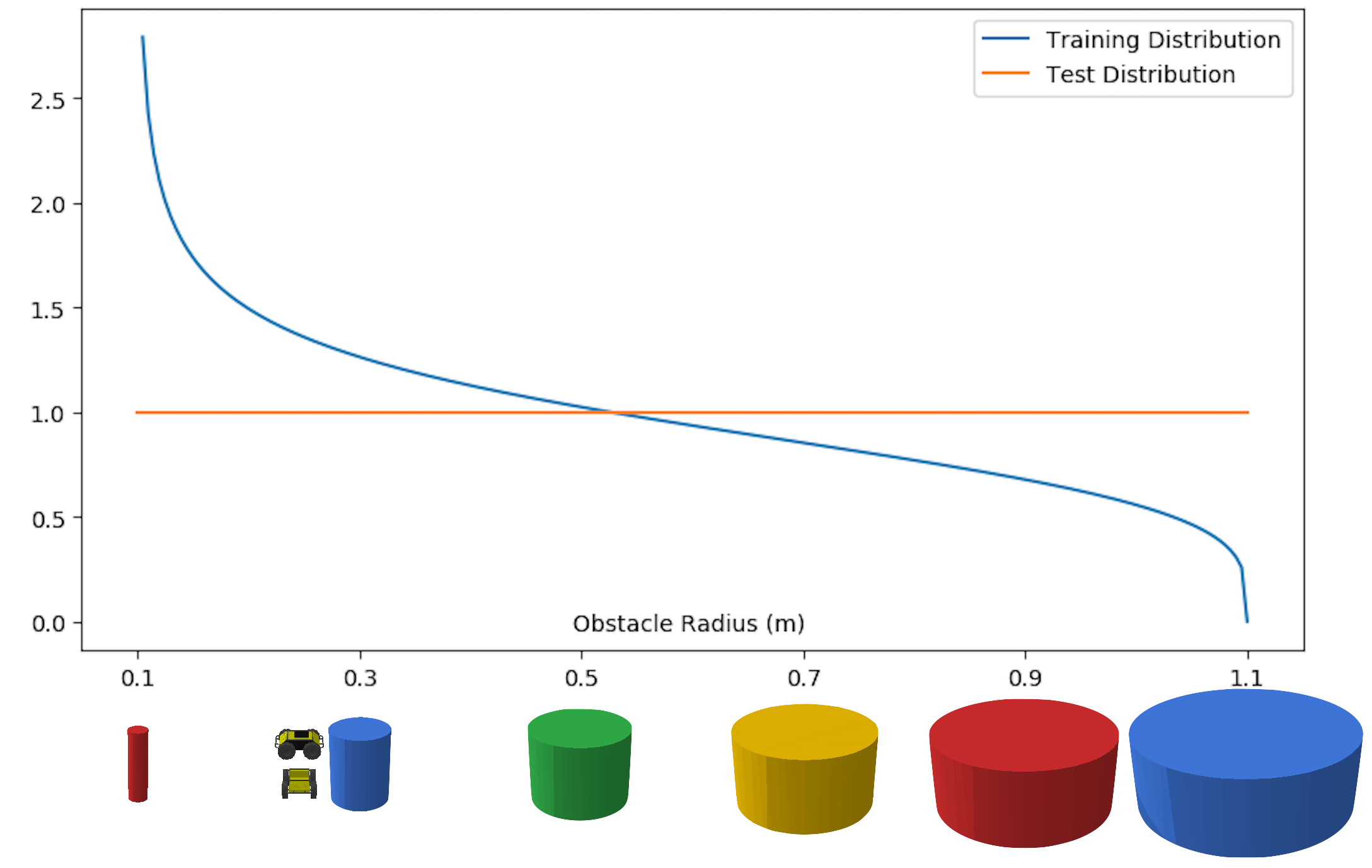}
 \caption{\footnotesize{Probability density functions (PDFs) for the beta distributions used to determine obstacle radii for the training and test environments. Here $\alpha = 0.8$, $\beta = 1.25$, $\alpha' = 1$, and $\beta' = 1$ which makes $\KL(\D'||\D) = \KL(\mathbb{B}(\alpha', \beta')||\mathbb{B}(\alpha, \beta)) \leq \mathcal{B} = 0.0819$. The robot's radius is $0.27$m, depicted next to the $0.3$m obstacle, and the radii of the obstacles are bounded between $0.10$m and $1.10$m.}}
 \label{fig:pdf-beta}
\end{figure}
\begin{figure}[h!]
 \centering
 \subfigure{\includegraphics[width=0.7\textwidth]{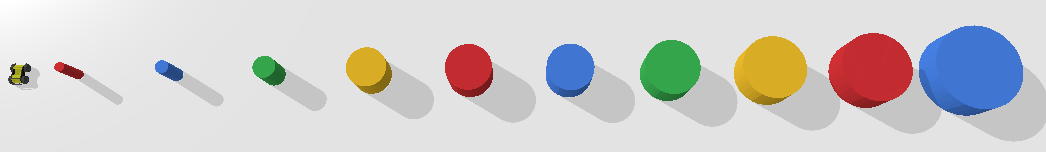}\label{fig:trenv}}\\
 \vspace{-5pt}
 \hspace{-1pt}
 \subfigure{\includegraphics[width=0.7\textwidth]{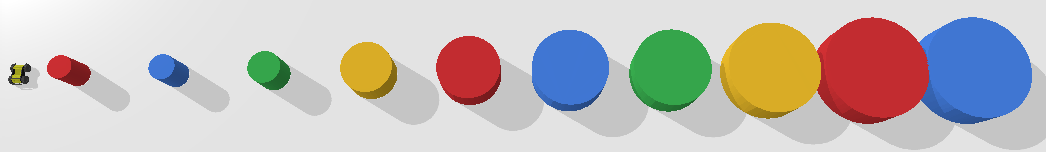}\label{fig:tenv}}
 \caption{\footnotesize{Comparison of obstacles generated by the beta distribution on the radius for the training (top image) and test (bottom image) environments. When the obstacles are sorted, it is easier to see that the training environment's obstacles are skewed towards a smaller radius, making those environments easier to navigate. This is apparent in the PDF comparison displayed in Figure \ref{fig:pdf-beta} as well.}}

 \label{fig:train/test envs avg radius}
\end{figure}

In Figure \ref{fig:train/test envs avg radius}, training and test obstacles are contrasted. Since the training environments are generated with a beta distribution that favors smaller radii, they are likely to be smaller than those generated with the test beta distribution (uniform distribution over the radius range). Additionally, the average obstacle radius, which can be calculated with the beta distribution parameters for the training and test distributions ($r_{\text{avg}} = \alpha/(\alpha + \beta)\times (r_{\text{max}} - r_{\text{min}}) + r_{\text{min}}$), differ by $0.11$m. This corresponds to 41\% of the robot's radius. Results on this example for the approach presented in Section \ref{sec:domain shift} are presented in Table \ref{table:robust PAC-bayes}. The table demonstrates that we are able to obtain strong bounds on generalization even in this distributionally-robust setting (albeit with a larger number of training environments). For example, with $5000$ training environments, we obtain a guaranteed expected success rate of $80.3\%$. We emphasize that this bound holds for \emph{any} $\D'$ that satisfies the constraint on the KL divergence (not just the specific test distribution chosen here). The estimated true success rate is approximately 91.8\% for $N = 5000$. With a sufficiently large number of training environments ($10^5$ in this example), the robust PAC-Bayes bound $\approx$ true expected cost on test + $\mathcal{B}$ + the scaled regularizer that appears in equation \eqref{eq:cor robust PAC-Bayes bound}.

\begin{table}[h!]
\begin{center}
\addtolength{\leftskip}{-1cm}
\addtolength{\rightskip}{-1cm}
\footnotesize
\begin{tabular}{ | p{10.05cm} | c | c | c | c | c | } \hline
N ($\#$ of training environments)     & 100   & 500   & 1000  & 5000  & 10000 \\ \hline\hline 

Robust PAC-Bayes bound ($C^\star_{\text{bound}'}$) & 0.453 & 0.276 & 0.238 & 0.197 & 0.185 \\ \hline
True (estimated) cost on $\D'$ using robust policy learned using $\D$ & 0.079 & 0.081 & 0.080 & 0.082 & 0.080 \\ \hline \hline
Non-robust PAC-Bayes bound on $\D$                 & 0.221 & 0.107 & 0.089 & 0.070 & 0.066 \\ \hline 
True (estimated) cost on $\D$ using policy learned on $\D$ & 0.054 & 0.057 & 0.054 & 0.054 & 0.056 \\ \hline 
Non-robust PAC-Bayes bound on $\D'$                  & 0.262 & 0.170 & 0.138 & 0.110 & 0.096 \\ \hline 
True (estimated) cost on $\D'$ using policy learned on $\D'$ & 0.081 & 0.080 & 0.081 & 0.079 & 0.083 \\ \hline 

\end{tabular}
\caption[]{\footnotesize{Comparison of distributionally-robust PAC-Bayes bounds with true costs estimated using $10^5$ environments. We are able to obtain strong bounds on generalization (albeit with a larger number of training environments than in the non-robust case). For example, with $5000$ training environments, we obtain a guaranteed expected success rate of $80.3\%$. The estimated true success rate is approximately 91.8\%. We also provide bounds and estimated true costs obtained using the standard (non-robust) PAC-Bayes framework as points of comparison. }
% With $10^5$ environments, the bound is a significantly tighter result of at least 81.5\% probability of success on the test environments where the true expected cost is 92.0\%. Note that the estimated true expected cost on training and test distributions are similar whether the robust controller is used or not.
}
\label{table:robust PAC-bayes}
\end{center}
\end{table}

 \subsection{Grasping}
 \label{sec:grasping}

We now consider the problem of learning neural network-based grasping policies with guarantees on performance across novel objects. 

{\bf Dynamics and sensors.} The system we consider is shown in Figure \ref{fig:arm} and consists of a KUKA iiwa arm grasping an object placed on a table. The robot is equipped with a camera that provides RGB-D images. The entire simulation (rigid-body dynamics and sensing) is performed using the PyBullet simulator \citep{Coumans18}. 

{\bf Objects.} We use the ShapeNet database \citep{shapenet} to generate objects for grasping. ShapeNet consists of more than $50,000$ objects and thus provides a rich and challenging dataset. We scale the objects so they fit in a $10 \ \text{cm}^3$ volume. The masses of the objects are randomly chosen uniformly from the range $[0.05, 0.15]$ kg and the inertia matrices are randomly chosen diagonal matrices with elements chosen uniformly from the range $[0.75, 1.25]$. Objects are initialized in the environment by dropping them from a certain height above the table and allowed to settle. The initial orientation from which they are dropped is also randomized (yaw $\sim \mathcal{N}(0,0.5^2)$, roll $\sim \mathcal{N}(0,0.5^2)$, pitch $\sim \mathcal{N}(0,0.01^2)$). Note that the randomization for the initial pitch angle is smaller; this ensures that objects land ``upright" on the table. We randomly select $N = 2000$ objects as our training data. Figure \ref{fig:shapenet} shows randomly chosen representative objects from the ShapeNet database. 

\begin{figure}[h!]
 \centering
 
%   \subfigure
%   {\includegraphics[trim={90cm 30cm 65cm 50cm},clip,width=0.18\textwidth]{}}
   \subfigure
   {\includegraphics[trim={87cm 40cm 68cm 40cm},clip,width=0.18\textwidth]{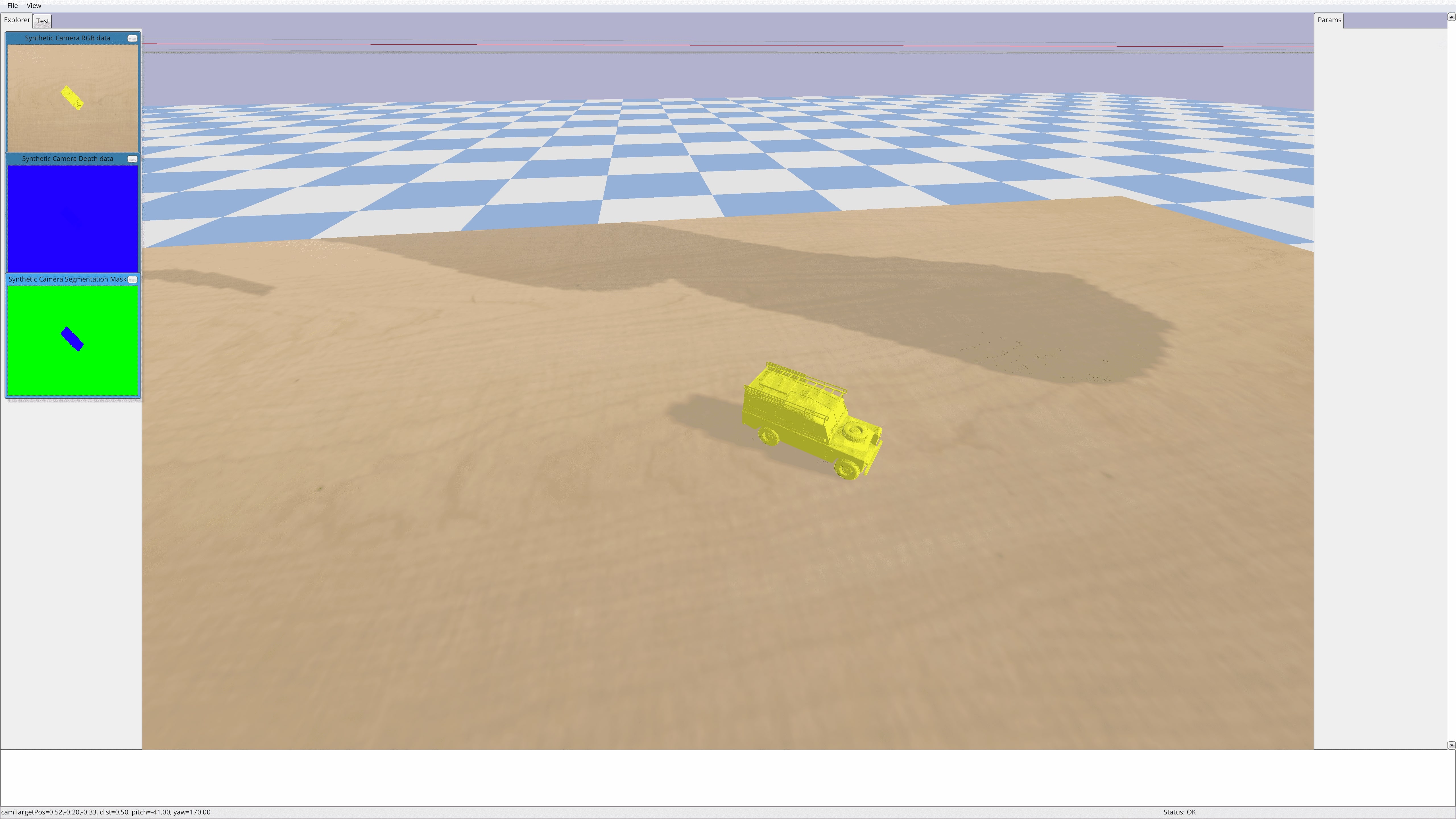}}
   % \hspace{20pt}
   \subfigure
   {\includegraphics[trim={90cm 40cm 65cm 40cm},clip,width=0.18\textwidth]{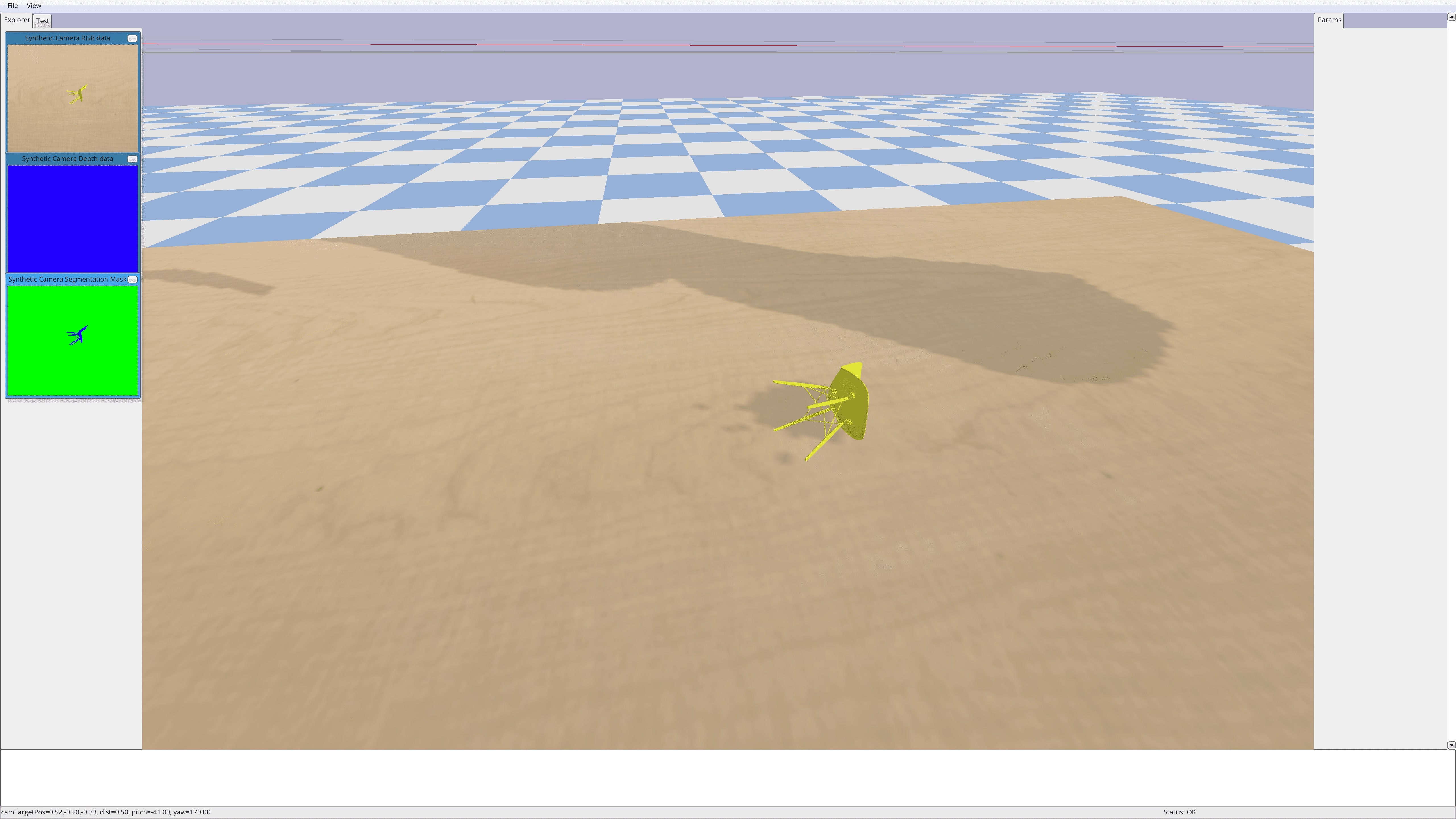}}
   \subfigure
   {\includegraphics[trim={90cm 40cm 65cm 40cm},clip,width=0.18\textwidth]{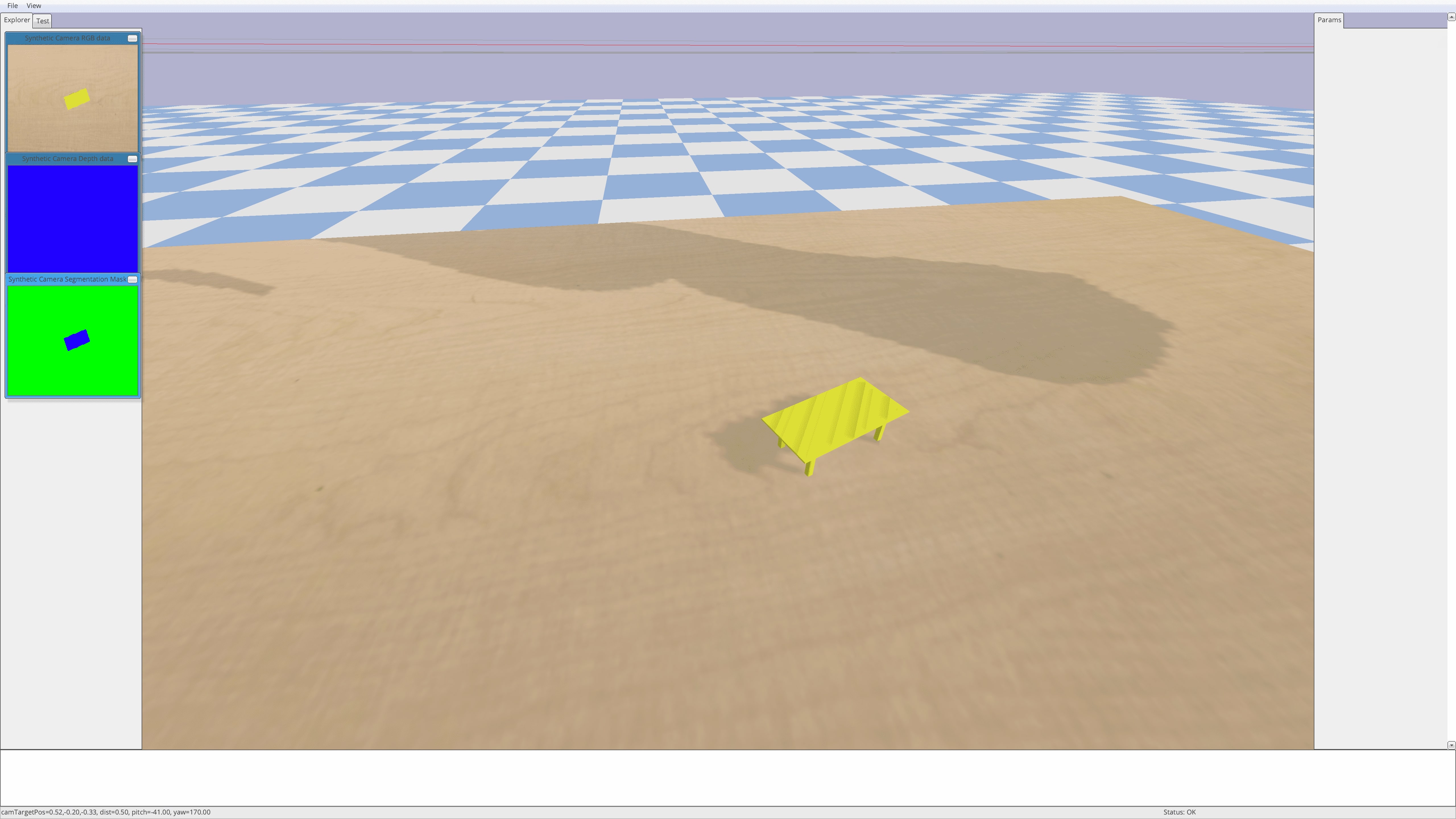}}
      \subfigure
   {\includegraphics[trim={90cm 40cm 65cm 40cm},clip,width=0.18\textwidth]{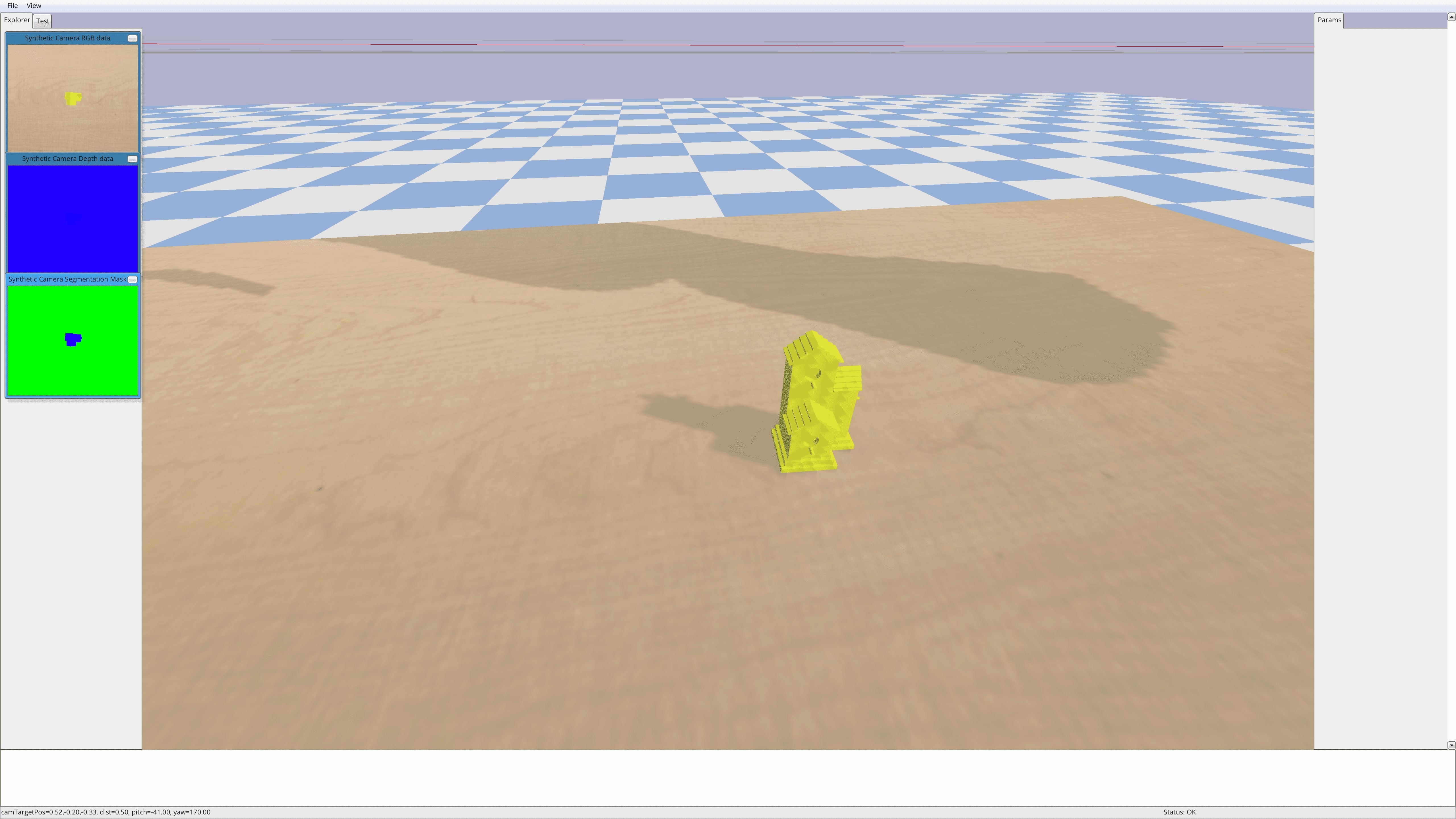}}
      {\includegraphics[trim={87cm 35cm 68cm 45cm},clip,width=0.18\textwidth]{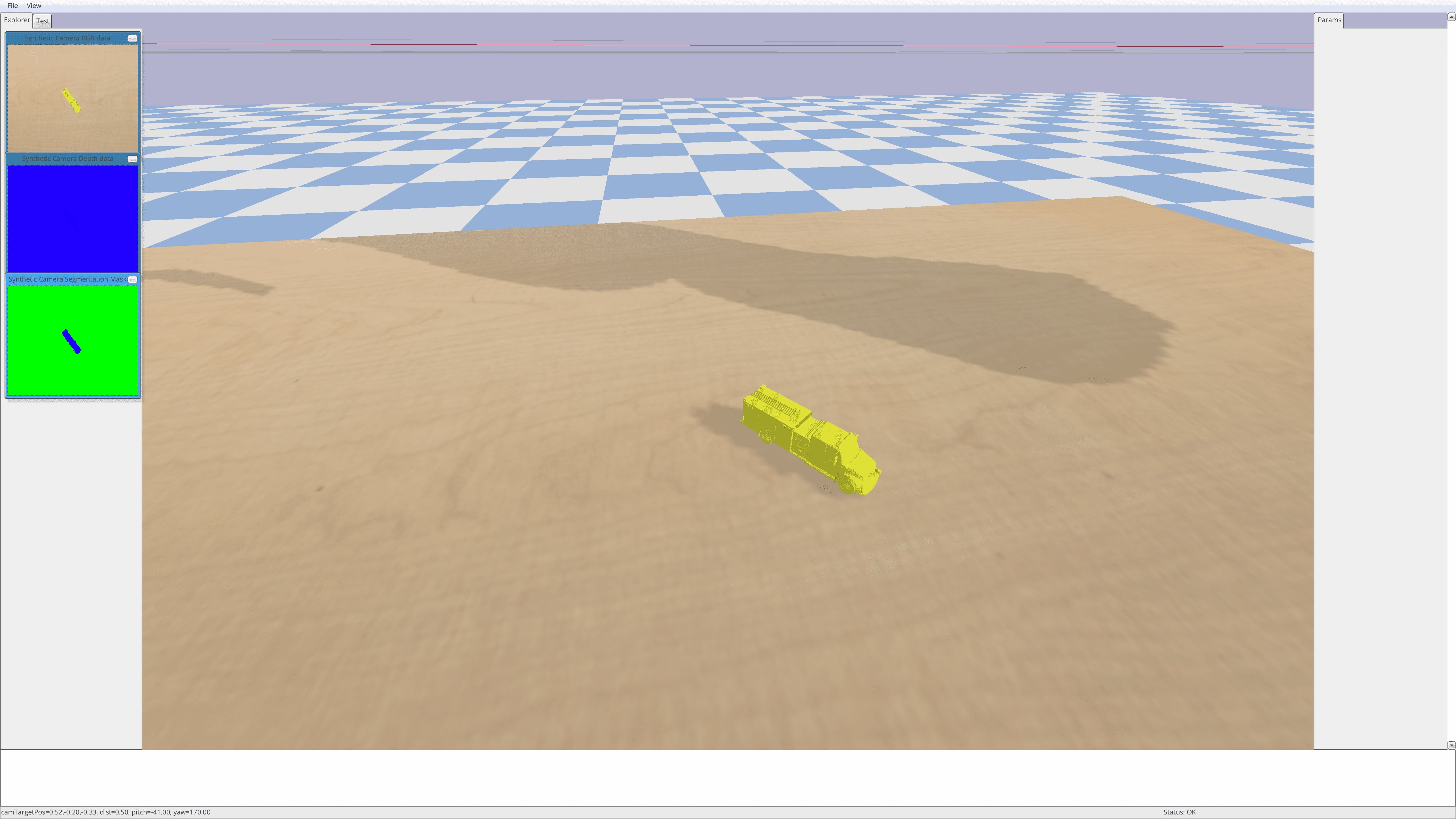}}
   \subfigure
   {\includegraphics[trim={85cm 42cm 70cm 38cm},clip,width=0.18\textwidth]{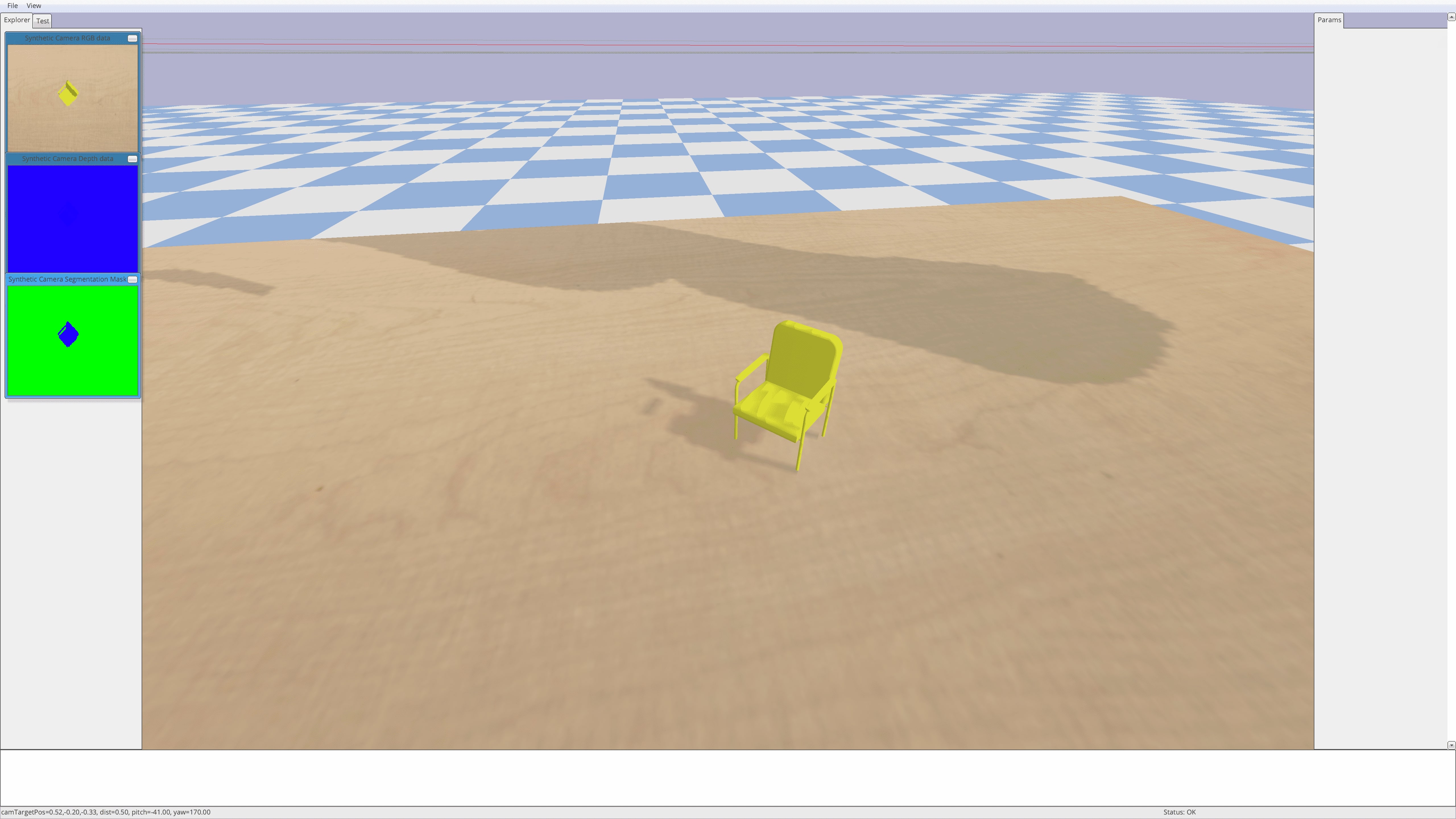}}
   % \hspace{20pt}
   \subfigure
   {\includegraphics[trim={90cm 40cm 65cm 40cm},clip,width=0.18\textwidth]{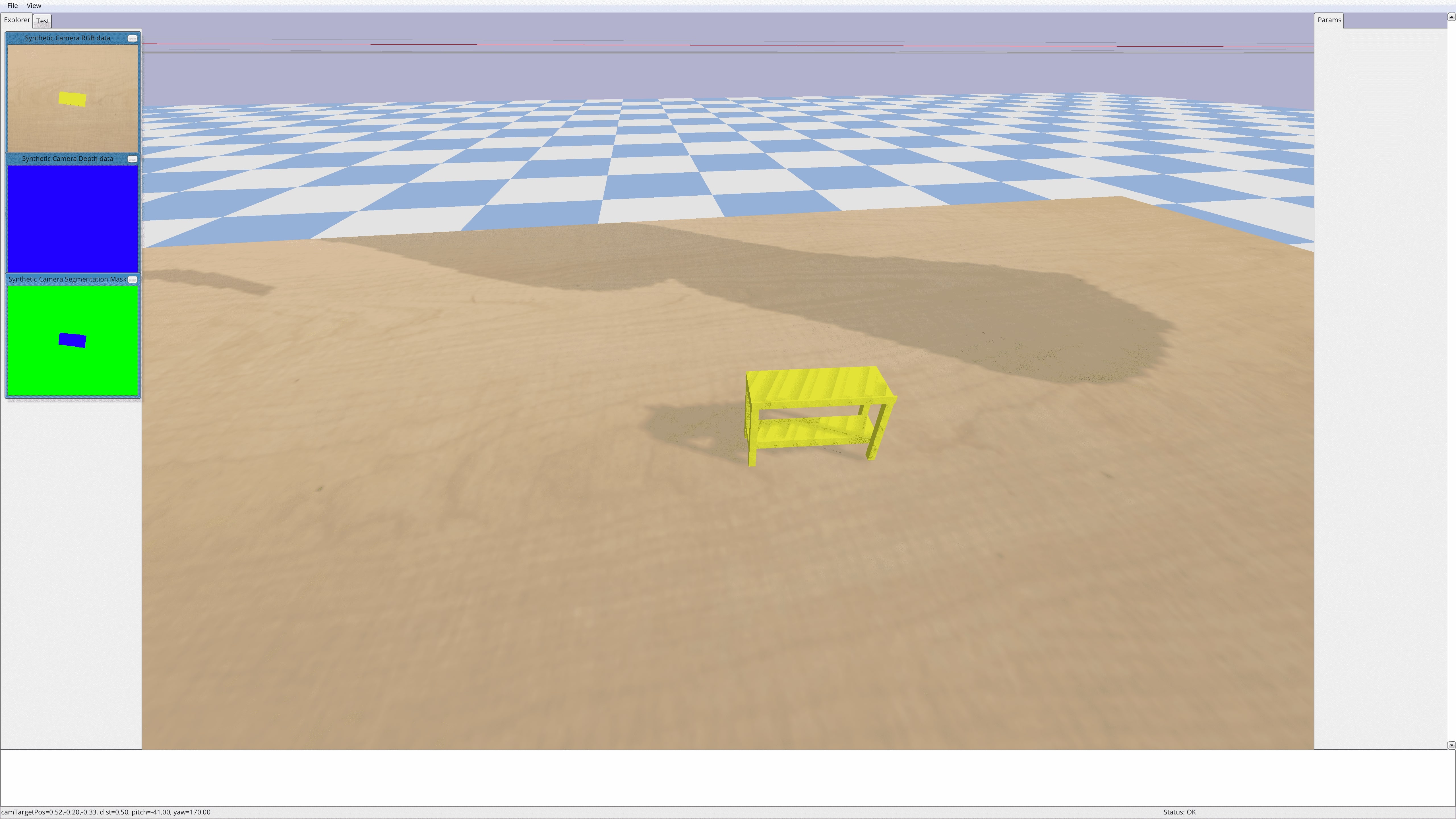}}
   \subfigure
   {\includegraphics[trim={84cm 40cm 71cm 40cm},clip,width=0.18\textwidth]{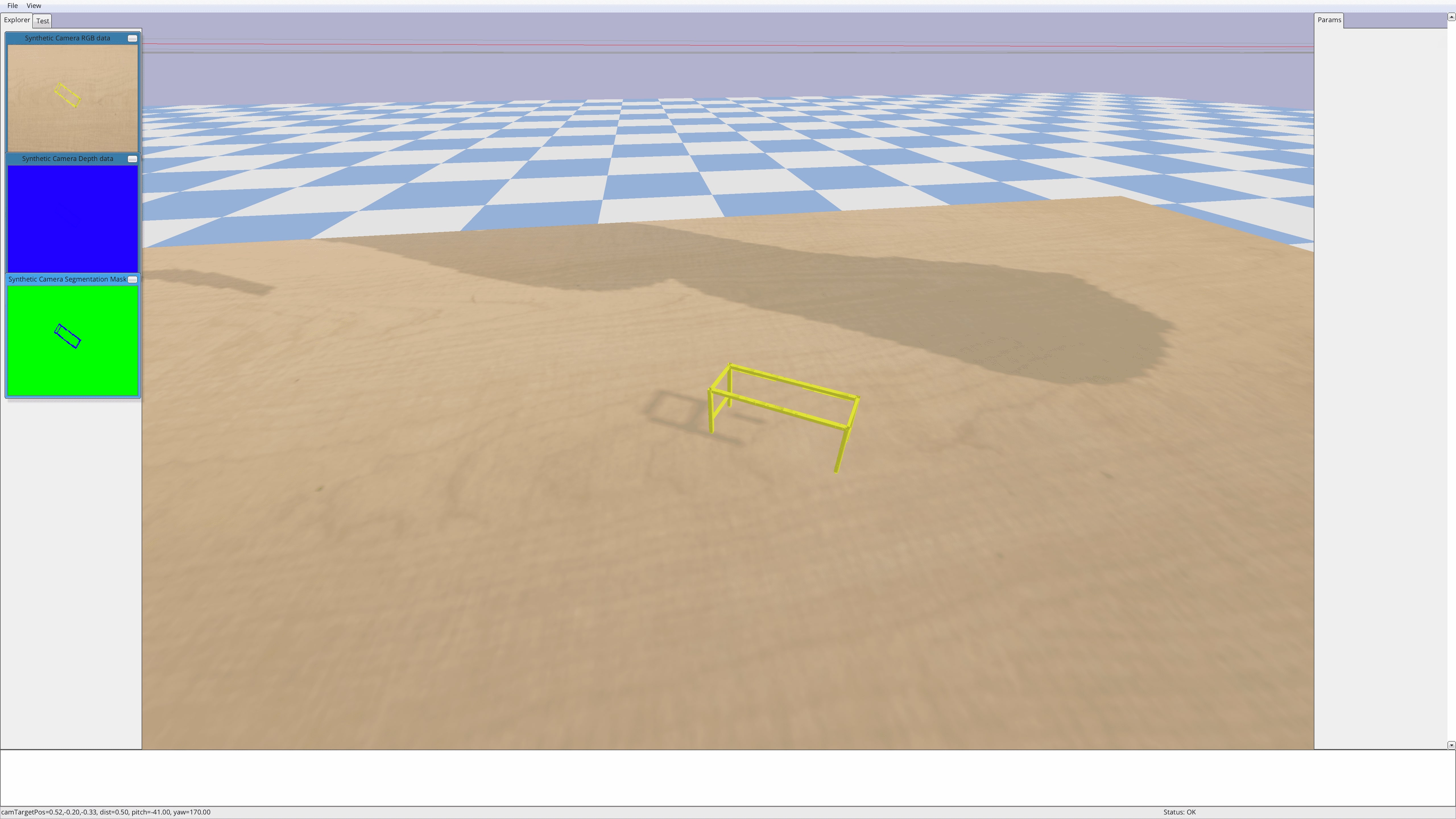}}
      \subfigure
   {\includegraphics[trim={90cm 41cm 65cm 39cm},clip,width=0.18\textwidth]{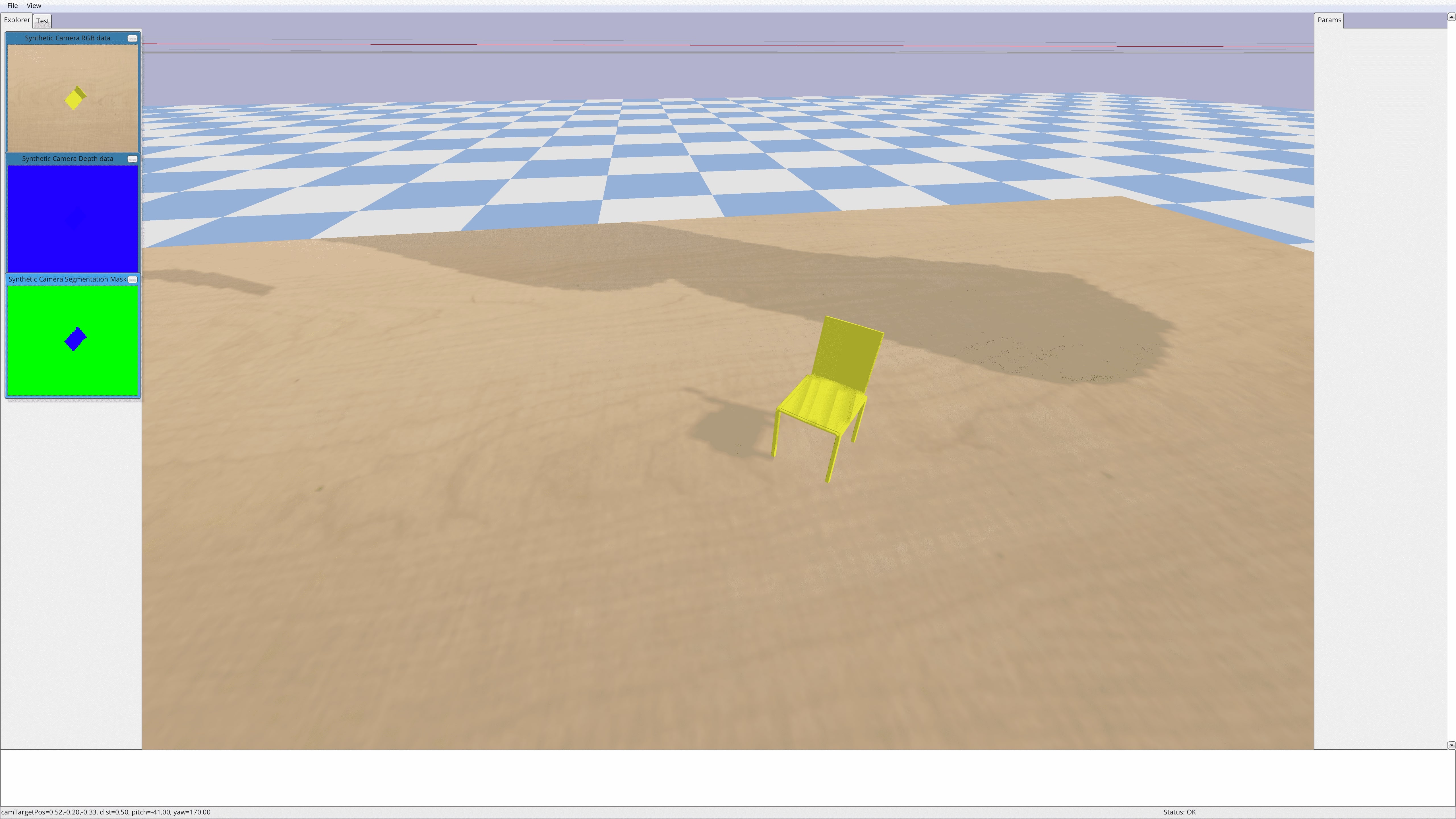}}
	\vspace{-5pt}
    \caption{\footnotesize{Representative examples of objects from the ShapeNet database. }\label{fig:shapenet}}
% \vspace*{-10pt}
\end{figure}

{\bf Cost function.} We choose a cost function that assigns a cost of 0 if the robot successfully grasps the object and a cost of 1 otherwise. In particular, a ``successful" grasp is one that lifts the object to a certain height  ($>2$cm) above the table. 

\begin{figure}[t]
\begin{center}
\includegraphics[width=0.99\columnwidth]{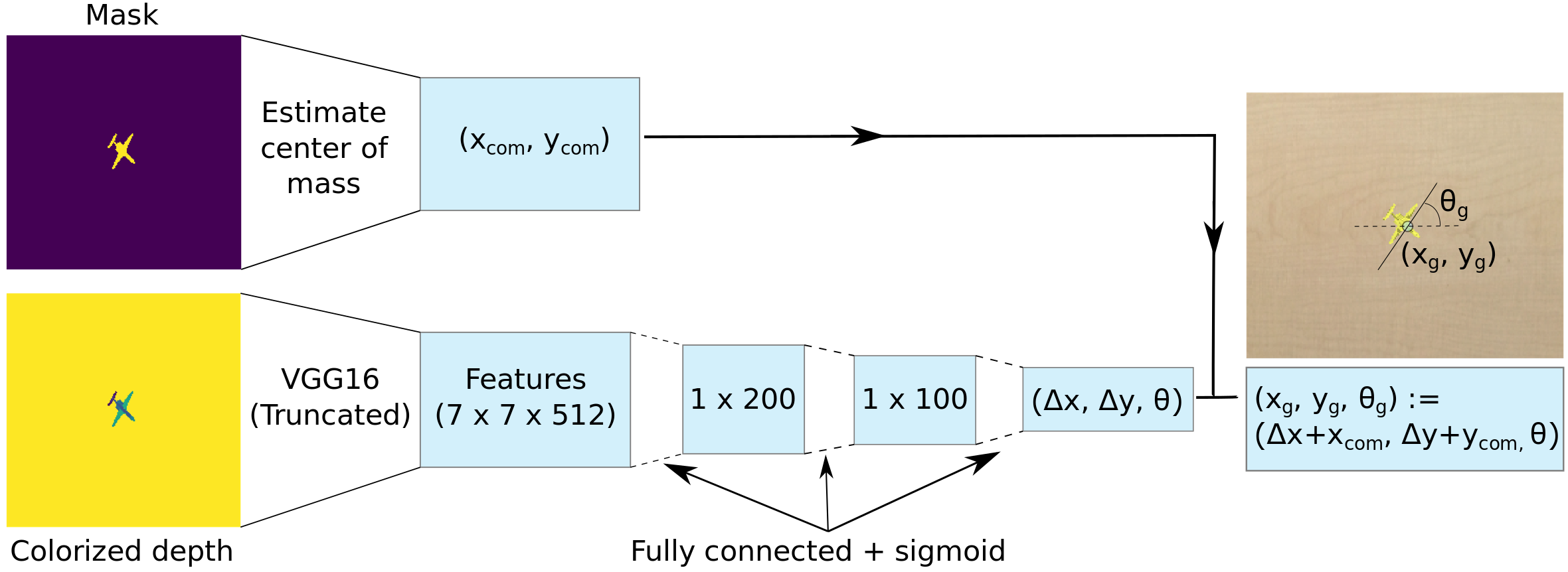}
\end{center} 
\caption{\footnotesize{The neural network-based architecture for our grasping policies.}  \label{fig:pipeline}}
\end{figure}

{\bf Neural network policy.} Our control policy maps a depth image of an object (and a corresponding mask image) to a grasp location $(x_g, y_g)$ and wrist angle $\theta_g$. A grasp is executed by servoing the robot's gripper to the grasp location $(x_g, y_g)$, setting the wrist angle to the desired angle $\theta_g$, and then executing an open-loop grasping maneuver that closes the grippers and lifts the robot arm up. 

The architecture for the pipeline that maps depth images (and corresponding masks) to grasps is illustrated in Figure \ref{fig:pipeline}. The mask image is used to estimate the center of mass (COM) $(x_\text{com}, y_\text{com}$) of the object (by simply computing the centroid of the object). Following \citep{Eitel15}, the raw depth image is \emph{colorized} via a jet colormap. This transforms the depth image from a single-channel image into a three-channel (RGB) image, thus allowing us to re-use neural network architectures pre-trained on the ImageNet dataset \citep{Deng09}. In particular, we pass the colorized depth image through a VGG16 network \citep{Simonyan14} pretrained on ImageNet and truncated to output a feature representation of size $7 \times 7 \times 512$. This feature vector is passed through three fully-connected layers with sigmoid activation. The (distributions over) weights of these fully-connected layers (represented in Figure \ref{fig:pipeline} using dashed lines) are learned using the training procedure described below. The output of this pipeline is a $1 \times 3$ vector $(\Delta x, \Delta y, \theta)$, which is combined with the estimated center of mass in order to obtain the final target grasp location and orientation $(x_g, y_g, \theta_g) := (\Delta x + x_\text{com}, \Delta y + y_\text{com}, \theta)$. 

{\bf Training.} We apply the procedure described in Section \ref{sec:continuous policy space} for training our stochastic control policy. In particular, we define Gaussian distributions $\N_{\mu, s}$ over the weights (and biases) of the fully-connected layers and choose a Gaussian distribution $\N_{\mu_0, s_0}$ as our prior distribution. Our particular choice of prior is motivated by the fact that the COM is a reasonable grasp position (i.e., $(\Delta x, \Delta y) = (0, 0)$ is a good guess for the grasp position in the absence of any further knowledge). In particular, we set the prior mean $\mu_0$ by randomly sampling from the distribution $\N_{0, 0.001^2}$. Note that while we could have chosen $\mu_0$ to be zero, a randomly chosen $\mu_0$ helps in breaking symmetries in the network (see \citep[Appendix B]{Dziugaite17} for a thorough discussion of this point). The prior variance $s_0$ is set to $0.01$.

For the purpose of optimization via stochastic gradient descent, we employ a differentiable surrogate cost function in place of the discontinuous 0-1 cost. In particular, for each object $\mathcal{O}_i$ in our training dataset, we exhaustively attempt $750$ grasps by discretizing the space $(\Delta x, \Delta y, \theta) \in [-0.05 \  \text{cm}, 0.05 \ \text{cm}] \times [-0.05 \ \text{cm}, 0.05 \ \text{cm}] \times [0 \ \text{rad}, \pi \ \text{rad}]$ into $5 \times 5 \times 30$ points. As before, $(\Delta x, \Delta y)$ denotes a perturbation from the estimated centroid of the object. For each of the $750$ grasps, we record whether the grasp succeeded or failed. We then choose the most ``robust" grasp for the given training object by selecting the grasp $(\Delta x^\star, \Delta y^\star, \theta^\star)$ that is most tolerant to errors in $\theta$ (i.e., the grasp for which one can perturb $\theta$ by the largest magnitude and still successfully grasp the object). This heuristic measure of robustness is motivated by our empirical observation that, in our setting, changes in grasp orientation have a very large impact on whether a grasp succeeds or not, while success is less sensitive to changes in the grasp position. 
 Our surrogate cost function is then computed as the magnitude of the difference between $(\Delta x^\star, \Delta y^\star, \theta^\star)$ and the grasp $(\Delta x, \Delta y, \theta)$ predicted by our neural network policy (note that this difference must take into account the fact that $\theta$ lies on a circle and must also be scaled to lie between $[0, 1]$ since costs in our framework are assumed to take values in this range). Importantly, we employ this surrogate cost \emph{only} for optimization; all bounds and results are presented for the 0-1 cost. 

{\bf Results.} We use $N = 2000$ objects randomly selected from the ShapeNet database as our training objects. We choose confidence parameters $\delta = 0.009, \delta' = 0.001$, and use $L = 1000$ samples to evaluate the sample convergence bound in equation \eqref{eq:sample bound}. The resulting PAC-Bayes bound $C^\star_\text{bound}$ is $0.294$. Thus, with probability $0.99$ over sampled training data, the optimized PAC-Bayes control policy is guaranteed to have an expected success rate of $70.6 \%$ on novel objects (assuming that they are drawn from the same underlying distribution as the training examples). We hypothesize that this bound could be further improved by using a larger number of samples $L$ in order to evaluate the sample convergence bound in equation \eqref{eq:sample bound} (this would come at an increased computational cost). 

We evaluated our PAC-Bayes policy on $1000$ test objects (unseen in the training phase). The policy was successful on $82.0\%$ of these objects. Videos from representative trials on test objects can be found at \href{https://youtu.be/NGI0_oXBdqw}{https://youtu.be/NGI0\_oXBdqw}.

We also compared our learned policy with a (deterministic) neural network policy trained by minimizing the training cost (i.e., without the regularization that comes from PAC-Bayes). We used an architecture that is identical to the PAC-Bayes policy and initialized weights for the network in the same manner as well (by using the means of the distribution used to define the initialization of the stochastic PAC-Bayes policy). The success rate for the resulting policy on test objects is approximately $78.0\%$ (as compared to $82.0\%$ for the PAC-Bayes policy). We thus see that without the regularization term from PAC-Bayes, the learned policy overfits to a larger degree. We also note that in addition to a loss in the empirical performance, simply minimizing the training cost does not allow us to obtain guarantees on generalization performance.

\section{\revision{Hardware Implementation}}
\label{sec:hardware}
\revision{In this section, we present results from hardware experiments aimed at validating our approach. The hardware platform we use is the Parrot Swing drone (Figure \ref{fig:lab_swing}). This lightweight (75g) quadrotor/fixed-wing hybrid vehicle is an appealing platform since it combines vertical take-off and landing with horizontal flight (thus making it more efficient than a traditional quadrotor configuration). We implement our approach from Section \ref{sec:finite policy space} (finite policy spaces) to achieve obstacle avoidance on different environments.}

%We further explore the obstacle avoidance domain with a hardware implementation of the finite policy space setting (Section \ref{sec:finite policy space}) on the PARROT Swing drone (Figure \ref{fig:swinga}). This lightweight (75g) quad-rotor, fixed-wing hybrid vehicle is an appealing platform because it is more efficient than traditional quad-rotors in forward flight, and is still able to takeoff and land vertically (unlike traditional fixed-wing vehicles). We command percentages of max roll, pitch, and yaw angles, as well as the vertical position of the Swing via bluetooth with the pyparrot python library \citep{pyparrot} at 10 Hz. 

% \begin{figure}[h!]
%  \centering
%    \subfigure[\label{fig:lab_swing}]{
%     \includegraphics[width=0.47\columnwidth]{figures/lab_swing.jpeg}} 
%    \subfigure[\label{fig:swingb}]{
%     \includegraphics[width=0.505\columnwidth]{figures/netted_area_labelled.png}} 
%    \vspace{-5pt}
% \caption{\footnotesize{Pictured in (a) is a PARROT Swing drone - a quadrotor, fixed-wing hybrid vehicle. In (b), the netted area with obstacles and the Swing avoiding obstacles are shown. Objects with marker pears on the interior of the netted are tracked with a Vicon motion tracking system.}\label{fig:swing}}
% \vspace{-10pt}
% \end{figure}

\revision{{\bf Experimental setup.} The Swing takes off from one end of a netted area (see Figure \ref{fig:swingb}) and travels at a fixed speed of $2.0$ m/s. To achieve a cost of 0, the Swing must avoid large, cylindrical obstacles over a time horizon of $5$ seconds and land safely; otherwise, the Swing will incur a cost of 1. Additionally, we consider the net encompassing the area ($7\text{m} \times 18\text{m}$) as ``wall" obstacles. We use a Vicon motion tracking system to track the obstacle and Swing's locations.  
Since the Swing does not possess any sensors for detecting obstacles, we \emph{simulate} a 40-ray depth sensor as if it were mounted on the Swing. This is done using the locations of obstacles, walls, and Swing reported by the motion capture system. Thus, the Swing \emph{only} uses real-time information from this simulated depth sensor; we do not provide any additional information (e.g., the Swing or obstacle locations, etc.).  
These sensor measurements are provided to a ``ground" computer that calculates the control input given a policy. The control inputs to the Swing correspond to percentages of maximum roll, pitch, and yaw angles, as well as the vertical position of the Swing. Commands are sent to the Swing at 10 Hz via bluetooth using the PyParrot python library \citep{pyparrot}. We implement a reactive obstacle avoidance policy (identical to the one described in Section \ref{sec:reactive obstacle avoidance control}) on the Swing for the discrete policy space setting. }

\revision{{\bf Dynamics model.} We train our policies by minimizing the PAC-Bayes bound in simulation; the learned distribution over policies is then implemented on the Swing hardware for validation (on environments not seen during training). We first performed system identification on the Swing in order to obtain an accurate dynamics model. 
If we keep the Swing's speed and vertical position constant, we can use a simple model for its dynamics similar to the one for the ground vehicle with states $[x,y,\psi]$ (Section \ref{sec:reactive obstacle avoidance control}). We can keep the Swing's speed constant by fixing its pitch angle $\theta$. Thus we fix the vertical position to $1$ m, and $\theta = 27^\circ$; the Swing will then travel at about $u_0 = 2.0$ m/s. Consider the following dynamics: }

\begin{equation}
\revision{
\left[ \begin{array}{c}
\dot{x} \\
\dot{y} \\
\dot{\psi} \end{array} \right] = 
 \left[ \begin{array}{c} 
-u_0\sin(\psi) \\
u_0\cos(\psi) \\
k_p  (k_u u_\psi - \psi) \end{array} \right],}
\label{eq:husky dynamics}
\end{equation}
\revision{where the only control input $u_\psi$ is a percentage of the Swing's maximum yaw angle $\psi$, and $k_p$ and $k_u$ are gains. Both $k_p$ and $k_u$ are fit with empirical data to create a realistic simulation. The gain $k_u$ is needed to scale $u_\psi$ such that $k_u u_\psi = \psi$ if $u_\psi$ remains constant and a steady state is reached. We limit $k_u u_\psi \in [-\frac{\pi}{4}, \frac{\pi}{4}]$ to restrict the Swing to maneuvers that do not significantly change the Swing's forward velocity or vertical position. We first determine $k_u$ by measuring (with the Vicon motion tracking system) the steady state yaw angle given a fixed control input. We then model the proportional gain $k_p$ with varied input signals such as sinusoidal and chirp functions of varying amplitude. The resulting dynamics, given by $k_u = 3.0$ and $k_p = 0.4$, are implemented in a simulated PyBullet environment analogous to the one described for the ground vehicle.}

\revision{{\bf Results.} We choose a time horizon $T = 50$; the Swing then flies for $5s$. We then choose $L = 100$ different $K$'s in the form of \eqref{eq:K} with $(x_0, y_0)$ chosen by discretizing the space $[0.1, 5.0] \times [0, 60.0]$ into $5$ and $20$ values for $x_0$ and $y_0$ respectively. As with the method for the ground vehicle in Section \ref{sec:reactive obstacle avoidance control}, we find an upper bound $C_{\text{bound}}^*$ on the true expected cost of the PAC-Bayes control policy $P_{\text{PAC}}^\star$ using Algorithm \ref{a:pac bayes control}. With 1000 training environments, $P_{\text{PAC}}^\star$ is guaranteed (with 99$\%$ probability) to succeed on new environments 88.6$\%$ of the time. The empirical success rate, tested on Swing hardware in unseen real-world environments, is approximately 90$\%$ (18/20 trials). Videos of representative trials can be found at https://youtu.be/p5CjcSsojg8.}

\section{Discussion and Conclusions}
\label{sec:conclusions}

We have presented an approach for learning control policies that provably generalize well to novel environments given a dataset of example environments. Our approach leverages PAC-Bayes theory to obtain upper bounds on the expected cost of (stochastic) policies on novel environments and can be applied to robotic systems with continuous state and action spaces, complicated dynamics, rich sensory inputs, and neural network-based policies. We synthesize policies by explicitly minimizing this upper bound using convex optimization in the case of a finite policy space, and using stochastic gradient descent in the more general case of continuously parameterized policies. We also present an extension of our approach for learning distributionally-robust policies, i.e., settings where test environments are drawn from a different distribution than training environments. We demonstrated our framework by learning (i) depth sensor-based obstacle avoidance policies with guarantees on collision-free navigation in novel environments, and (ii) neural network-based grasping policies with guarantees on generalization to new objects. Our simulation results compared the generalization guarantees provided by our technique with exhaustive numerical evaluations in order to demonstrate that our approach is able to provide strong bounds even with relatively few training environments. \revision{Our hardware experiments -- which tested policies learned using our framework on a real-world obstacle avoidance example -- suggest that our technique is effective for developing policies that generalize well to (unseen) real-world environments.} \revisionn{We believe that taken together, the simulation and hardware results provide significant evidence for the ability of our approach to provide strong generalization guarantees in realistic robot control settings.}

\subsection{Challenges and Future Work}
\label{sec:future work}

There are a number of challenges and exciting opportunities for future work on both the theoretical and practical fronts. We highlight a few such directions here.

{\bf Deterministic policies.} It may be desirable in many cases (e.g., safety-critical settings) to learn deterministic policies instead of stochastic ones. Techniques for converting stochastic hypotheses into deterministic hypotheses have been developed within the PAC-Bayes framework (e.g., using majority voting in the classification setting \citep{Langford03, Lacasse07}); an interesting avenue for future work is to extend such techniques to the policy learning setting we consider here. Another possibility is to use different frameworks for obtaining generalization bounds that are better suited to deterministic policies (e.g., bounds based on algorithmic stability \citep{Bousquet02, Kearns99, Hardt15} and sample compression \citep{Floyd95, Langford05}). An important feature of the reduction-based perspective we presented in Section \ref{sec:pac bayes control} is that it immediately allows us to port over such bounds from the supervised learning setting to our setting.

{\bf Choosing the prior.} While we have demonstrated that our framework allows us to obtain strong bounds on generalization performance, an important direction for future work is to find ways to further improve these bounds. We believe that a particularly promising approach for doing this is to systematically choose the prior $P_0$ over the control policy space. \revision{The ability to specify a strong prior is an important distinction between the robot control settings considered in this paper and standard supervised learning problems (e.g., image recognition). For standard supervised learning problems, it is often challenging to specify a prior over the space of hypotheses. While the priors in the examples considered in this paper were chosen in a fairly simplistic manner (e.g., a uniform prior over the finite policy space for the obstacle avoidance example in Section \ref{sec:reactive obstacle avoidance control}, or a prior that attempts to keep the grasp position close to the center of mass in the grasping example in Section \ref{sec:grasping}), we believe that choosing priors in a more systematic manner could significantly improve the generalization bounds.} One possibility for choosing a prior more carefully is to embed domain knowledge into the prior; for example, one could choose a prior that incorporates a physics model of the system, or one that is derived from an existing state-of-the-art approach for the problem under consideration (e.g., choosing a prior that encourages force-closure grasps). Another promising possibility is to learn the prior from a human expert using imitation learning. \revision{By incorporating such priors for robot control problems, we may need significantly smaller datasets than the ones currently used to train state-of-the-art supervised learning models while still obtaining strong generalization guarantees.}

\revision{{\bf Incorporating different regularizers.} The algorithmic approach we employ in this work (Section \ref{sec:computing pac-bayes controllers}) involves minimizing a combination of the training cost and a regularizer specified by PAC-Bayes theory. This is motivated by the desire to optimize the PAC-Bayes upper bound on the expected cost on novel environments. However, we note that there are a variety of regularization techniques that have been empirically demonstrated to promote generalization (e.g., dropout \cite{Srivastava14} and overparameterization \cite{Neyshabur14, Zhang16, Arora18}), in addition to other techniques such as domain randomization \cite{Tobin17} and batch normalization \cite{Ioffe15}. While these techniques do not yet have strong generalization bounds associated with them, there is a growing literature on this topic \cite{McAllester13, Arora18, Li18, Bjorck18}. Incorporating different regularization schemes into our framework while maintaining strong generalization guarantees is a promising direction for future work.}

%{\bf Hardware implementation.} We are currently working on hardware implementations of the framework presented in this paper. In particular, we believe that our approach holds particular promise for making strong generalization guarantees for navigation tasks with micro aerial vehicles and for manipulation problems. We are also working on further scaling our approach to consider longer-horizon tasks that involve planning (in contrast to the examples considered here, which are relatively short-horizon tasks). % ~~finished, currently working on meta-learning things

{\bf Extensions to meta-learning.} Another exciting future direction is to combine the techniques presented here with \emph{meta-learning} techniques in order to achieve provably data-efficient control on novel tasks. Specifically, we are currently investigating using a PAC-Bayes bound as part of the objective of a meta-learning algorithm such as MAML \citep{Finn17a} to achieve improved generalization performance and few-shot learning. 

\vspace{7pt}

We believe that the approach presented here along with the indicated future directions represent an important step towards learning control policies with provable guarantees for challenging robotic platforms with rich sensory inputs operating in novel environments.

\vspace{-7pt}
\section*{Acknowledgements}

The authors are grateful to Max Goldstein for initiating the grasping example in Section \ref{sec:grasping} and contributions to the conference version of this paper presented at CoRL 2018. We also gratefully acknowledge the support of NVIDIA Corporation with the donation of the Titan Xp GPU used for this research. 

\vspace{-5pt}
\section*{Funding}

The authors were partially supported by the Office of Naval Research [Award Number: N00014-18-1-2873], the National Science Foundation [IIS-1755038], the Google Faculty Research Award, and the Amazon Research Award.

\vspace{-5pt}

\bibliographystyle{abbrvnat} 
\bibliography{irom.bib}

\end{document}